\setlist[itemize]{noitemsep, topsep=-5pt}
\setlist[enumerate]{noitemsep, topsep=-5pt}
\newcommand{\AR}[1]{{\color{magenta}AR: #1}}
\newcommand{\AJ}[1]{{\color{blue}AJ: #1}}
\newtheorem{lemma}{Lemma}
\newtheorem{definition}{Definition}
\newtheorem{theorem}{Theorem}
\newtheorem{remark}{Remark}
\newtheorem*{theorem*}{Theorem}
\newtheorem*{proposition*}{Proposition}
\newcommand{\Lnorm}{{L}}
\newcommand{\Lcombi}{\mathcal{L}}
\newcommand{\Qnorm}{{Q}} 
\newcommand{\Qcombi}{{\mathcal{Q}}}
\newcommand{\Prao}{{P_{opt}}} 
\newcommand{\Ploukas}{{P_{Loukas}}} 
\newcommand{\diagmatrix}{{\Delta}}
\newcommand{\preservedspace}{\mathcal{R}}
\title{Taxonomy of reduction matrices for Graph Coarsening}
\author{%
  Antonin Joly \\
  CNRS, IRISA, Rennes, FRANCE\\
  \texttt{antonin.joly@irisa.fr} \\
  \And
  Nicolas Keriven\\
  CNRS, IRISA, Rennes, FRANCE\\
  \texttt{nicolas.keriven@cnrs.fr} \\
  \And
  Aline Roumy \\
  INRIA, Rennes, FRANCE\\
  \texttt{aline.roumy@inria.fr}
}
\begin{document}

\maketitle

\begin{abstract}
Graph coarsening aims to diminish the size of a graph to lighten its memory footprint, and has numerous applications in graph signal processing and machine learning. It is usually defined using a reduction matrix and a lifting matrix, which, respectively, allows to project a graph signal from the original graph to the coarsened one and back. This results in a loss of information measured by the so-called Restricted Spectral Approximation (RSA). Most coarsening frameworks impose a fixed relationship between the reduction and lifting matrices, generally as pseudo-inverses of each other, and seek to define a coarsening that minimizes the RSA.

In this paper, we remark that the roles of these two matrices are not entirely symmetric: indeed, putting constraints on the \emph{lifting matrix alone} ensures the existence of important objects such as the coarsened graph's adjacency matrix or Laplacian.
In light of this, in this paper, we introduce a more general notion of reduction matrix, that is \emph{not} necessarily the pseudo-inverse of the lifting matrix. 
We establish a taxonomy of ``admissible'' families of reduction matrices, discuss the different properties that they must satisfy and whether they admit a closed-form description or not. We show that, for a \emph{fixed} coarsening represented by a fixed lifting matrix, the RSA can be \emph{further} reduced simply by modifying the reduction matrix. We explore different examples, including some based on a constrained optimization process of the RSA. Since this criterion has also been linked to the performance of Graph Neural Networks, we also illustrate the impact of this choices on different node classification tasks on coarsened graphs.

\end{abstract}

\section{Introduction}

In recent years, several applications in data science and machine learning have produced large-scale \emph{graph} data \cite{hu2020open, Bronstein2021}. For instance, online social networks \cite{Ediger2010} or recommender systems \cite{Wang2018} routinely produce graphs with millions of nodes or more. To handle such massive graphs, researchers have developed general-purpose \emph{graph reduction} methods \cite{bravo2019unifying}, such as \textbf{graph coarsening} \cite{loukas2019graph, chen2022graph} as well as specific learning techniques on these coarsened graphs \cite{joly2024graph,huang2021scaling}. Graph coarsening starts to play an increasingly prominent role in machine learning applications \cite{chen2022graph}.

\paragraph{Graph Coarsening and Spectral guarantees}
Graph coarsening consists in producing a small graph from a large graph 
while retaining some of its key properties.  
There are many ways to evaluate the quality of a coarsening, following different criteria \cite{dhillon2007weighted, loukas2019graph, chen2022graph}. The majority of these approaches aims to preserve spectral properties of the graph and its Laplacian, and have given rise to different coarsening algorithms \cite{loukas2019graph, chen2023gromov, bravo2019unifying, jin2020graph, loukas2018spectrally}. The most widely used spectral guarantee is the so-called \emph{Restricted Spectral Approximation} (RSA, see Sec.~\ref{sec:rappel}), introduced by Loukas \cite{loukas2019graph}. In broad terms, the RSA states that the frequency content of a certain subspace of graph signals is approximately preserved when projected on the coarsened graph and then re-lifted in the original one, or intuitively, that the coarsening is well-aligned with the low-frequencies of the Laplacian. The RSA is a general-purpose criterion with many applications, from clustering to signal reconstruction \cite{loukas2019graph, joly2024graph}.
Recently, RSA guarantees were also used to guarantee the performances of Graph Neural Networks on the coarsened graph \cite{joly2024graph}. 

\paragraph{Projection and lifting.} The projection and re-lifting operations are generally described by two matrices: the \emph{reduction matrix} allows to transform graph signals from the original graph to the coarsened one, while the \emph{lifting matrix} does the opposite. In virtually all works on graph coarsening, these two matrices are simply pseudo-inverse of each other, and both can represent the ``graph coarsening'' indifferently. However, in this paper we make the following remark: \emph{their roles are not entirely symmetric}. Mathematically, as we will see, the \emph{lifting matrix alone} has to be quite constrained for the graph coarsening to be ``well-defined'', with a consistent adjacency matrix and Laplacian. The reduction matrix, on the other hand, does not seem to play a role in the \emph{definition} of graph coarsening. However, it \emph{does} play a role in computing the RSA. Therefore, in this paper, we examine the following questions: for a \emph{fixed} lifting matrix, 1) \textbf{What are the admissible degrees of freedom for the reduction matrix?} and 2) \textbf{Can we improve the RSA by simple modification of the reduction matrix alone?} 

\paragraph{Contribution.} In this paper, we thus define and then explore the admissible sets of reduction matrices over which to optimize the RSA. We introduce several interesting examples, from closed-form ones motivated by notions of optimality and memory footprint to optimization-based ones over well-defined sets with various properties. We compare these different choices, both in terms of RSA and performance when used within GNNs trained on coarsened graphs.


\paragraph{Related work} 
Graph coarsening is derived from the multigrid-literature \cite{ruge1987algebraic} and belongs to a broader class of methods commonly referred to as \emph{graph reduction}. The latter includes graph sampling \cite{Hu2013}, 
graph sparsification \cite{spielman2011spectral, allen2015spectral, lee2018constructing}, 
or more recently graph distillation \cite{jin2021graph, zheng2024structure, jin2022condensing}, 
inspired by dataset distillation \cite{wang2018dataset}.
%
Some of the first coarsening algorithms were linked to the graph clustering community, e.g. \cite{defferrard2016convolutional} which used recursively the Graclus algorithm \cite{dhillon2007weighted} algorithm. 
Linear algebra technics such as the Kron reduction were also employed \cite{loukas2019graph} \cite{dorfler2012kron}. In \cite{loukas2019graph}, the author introduces the RSA, and presents a greedy algorithm that recursively merge nodes by optimizing some cost, which in turns leads to RSA guarantees.
This is the approach we use in our experiments (Sec.~\ref{sec:experiments}). It was followed by several similar methods with the same spectral criterion \cite{chen2023gromov, bravo2019unifying, jin2020graph, loukas2018spectrally}. Since modern graph often includes node features, other approaches proposed to take them into account in the coarsening process, often by learning the coarsening with specific regularized loss \cite{kumar2023featured, ma2021unsupervised,dickens2024graph}. 
On the contrary, the RSA guarantees \cite{loukas2019graph} leveraged in this paper are \emph{uniform} over a whole subspace to ensure the spectral preservation of the coarsened graph. 

Closer to us, some works aim to optimize various quantities after the coarsening has been computed. For instance, GOREN \cite{cai2021graph} optimizes in a data-driven manner the edges' weights in the coarsened graph, which is quite different from focusing on reduction/lifting matrices as proposed here. Moreover, we consider the RSA, a general-purpose criterion not necessarily related to downstream tasks. The literature also includes different choices of reduction/lifting matrices, or propagation matrices in GNNs on coarsened graphs \cite{joly2024graph}, but to our knowledge this paper is the first to put forth the idea of \emph{decorrelating reduction and lifting matrices} up to a certain point, with precise mathematical definitions of the consequences.


Finally, we mention \emph{graph pooling}, which is designed to mimick the pooling process in deep convolutional models on images and is somewhat related to graph coarsening in terms of vocabulary. One difference is that graph pooling tends to focus only on the reduction phase while graph coarsening focuses on repeated reduce-then-lift operations between the coarsened and original graphs. Although some pooling method can be computed as preprocessing such GRACLUS \cite{dhillon2007weighted}, 
the most well-known pooling methods are data-driven and fully differentiable (Diffpool \cite{ying2018hierarchical}, top-K pooling \cite{gao2019graph}, DMoN \cite{tsitsulin2023graph}).  These methods use interchangeably the reduction and the lifting matrix by choosing one as the \emph{transposed} of the other. 
Usually in graph pooling, this matrix is unconstrained and is either defined heuristically or learned, while graph coarsening proposes mathematical links between the reduction and the lifting matrix. In this paper, we explore these mathematical links.


\paragraph{Outline.} We start by background material on graph coarsening in Sec.~\ref{sec:rappel}, highlighting the roles of the lifting and reduction matrices. 
We emphasize the asymmetricity of their roles, along with the strong constraints put on the lifting matrix alone. 
Then in Sec.~\ref{sec:pset}, we study sets of admissible reduction matrices, focusing on various notions of ``generalized inverses''. Some will be very generic, while others will admit parametrizations that are convenient for optimization. 
In Sec.~\ref{sec:pexamples}, we then study several motivated examples of reduction matrices, classical or entirely novel, with an analytical closed-form expression or based on optimization procedures. We relate them to the properties defined in the section before.
%
Finally, we compare their performance in terms of RSA or GNN performance in Sec.~\ref{sec:experiments}. 
The code is available at \url{https://gitlab.inria.fr/anjoly/taxonomy-coarsening-matrices},
and proofs are deferred to App.~\ref{app:proof}.

\section{Characterizing graph coarsening with the lifting matrix}
\label{sec:rappel}
\paragraph{Notations} A graph $G$ with $N$ nodes is described by its weighted adjacency matrix $A \in \mathbb{R}^{N\times N}$. The combinatorial Laplacian is defined as $ \Lcombi = \Lcombi(A) := D - A$, where $D = D(A) := \text{diag}(A1_N)$ is the diagonal matrix of the degrees. A matrix is said to be \textbf{binary} if all its coefficients are either $0$ or $1$. For a symmetric positive semi-definite (p.s.d) matrix $M$, we denote $\|x\|_M = \sqrt{x^\top M x}$ the Mahalanobis semi-norm associated with $M$. 


\paragraph{Coarsening} The goal of coarsening is to reduce the size of a graph $G$ with $N$ nodes to a coarsened graph $G_c$ with $n < N$ nodes. The proportion of reduction achieved is measured by the \textbf{coarsening ratio} $r = 1- \frac{n}{N}$. The mapping from $G$ to $G_c$ is obtained by grouping set of nodes in $G$ to form supernodes in $G_c$. This mapping can be represented by a matrix $\Qcombi\in {\mathbb{R}}^{N\times n}$, where $\Qcombi_{ik}>0$ means that node $i$ from $G$ has been mapped onto the supernode $k$ of $G_c$. 
To represent a true mapping from the original nodes to the supernodes, the matrix $\Qcombi$ needs to be \textbf{well-partitioned}. 
\begin{definition}[Well-partitioned $\Qcombi$ matrix]
     $\Qcombi$ is said to be well-partitioned if it has exactly \textbf{one} non-zero coefficient per row. 
\end{definition}
One natural way \cite{loukas2019graph} to define the adjacency matrix of the coarsened graph is then to take
\begin{equation}\label{eq:ac}
    A_c = \Qcombi^\top A \Qcombi
\end{equation}
In addition to be well-partitioned, it is also natural to impose that $\Qcombi$ is binary, as shown by \cite[Prop.7]{loukas2019graph} in the following Lemma. 
\begin{lemma}[{\cite{loukas2019graph}}]\label{lem:loukas}
Let $\Qcombi$ be a well-partitioned matrix. The two following properties are equivalent: (i) $\Qcombi$ is proportional to a binary matrix; 
(ii) for all $A$, we have $\Lcombi_c := \Lcombi(A_c) = \Qcombi^\top \Lcombi(A) \Qcombi$.
\end{lemma}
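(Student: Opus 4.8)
The plan is to first collapse the claimed equivalence to a single, transparent identity, and then dispatch the two implications. Writing $\Lcombi(M) = D(M) - M$ with $D(M) = \mathrm{diag}(M 1)$, and using $A_c = \Qcombi^\top A \Qcombi$, I would note that $\Qcombi^\top \Lcombi(A) \Qcombi = \Qcombi^\top D(A) \Qcombi - \Qcombi^\top A \Qcombi = \Qcombi^\top D(A)\Qcombi - A_c$, whereas $\Lcombi_c = \Lcombi(A_c) = D(A_c) - A_c$. Hence, for a fixed $A$, property (ii) holds if and only if $D(A_c) = \Qcombi^\top D(A) \Qcombi$, so the whole statement reduces to: $\Qcombi$ is proportional to a binary matrix iff $D(\Qcombi^\top A \Qcombi) = \Qcombi^\top D(A)\Qcombi$ for every (admissible, i.e. symmetric nonnegative) adjacency matrix $A$.

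Next I would fix notation exploiting that $\Qcombi$ is well-partitioned: let $q_i > 0$ be the unique nonzero entry of row $i$, lying in column $\pi(i)$, let $S_k = \pi^{-1}(\{k\})$ be the pairwise disjoint preimages, and set $q := \Qcombi 1_n$, so that $(q)_i = q_i$. Two elementary computations then drive everything. First, since the $S_k$ are disjoint and $D(A)$ is diagonal, $\Qcombi^\top D(A)\Qcombi$ is itself diagonal, with $k$-th diagonal entry $\sum_{i \in S_k} q_i^2 (A 1_N)_i$. Second, $D(A_c) = \mathrm{diag}(\Qcombi^\top A \Qcombi 1_n) = \mathrm{diag}(\Qcombi^\top A q)$ has $k$-th diagonal entry $\sum_{i \in S_k} q_i (Aq)_i$. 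Subtracting, condition (ii) for a fixed $A$ is equivalent to $\sum_{i \in S_k} q_i \sum_j A_{ij}(q_j - q_i) = 0$ for every supernode $k$.

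The implication (i) $\Rightarrow$ (ii) is now immediate: if all $q_i$ equal a common constant $c$, every factor $q_j - q_i$ vanishes and the displayed identity holds for all $A$ (equivalently, $\Qcombi 1_n = c 1_N$, so the $k$-th diagonal entry of $D(A_c)$ is $c^2 \sum_{i \in S_k}(A1_N)_i$, matching the first computation). For the converse I would use the quantifier over $A$: the left-hand side above is linear in the symmetric matrix $A$, so it suffices to test it on the single-edge adjacency matrices $A = e_i e_j^\top + e_j e_i^\top$ for $i \ne j$. Evaluating the identity at the supernode $k = \pi(i)$, all terms drop except $q_i(q_j - q_i) + [\,j \in S_k\,]\,q_j(q_i - q_j) = 0$: when $i$ and $j$ share a supernode this reads $-(q_i - q_j)^2 = 0$, and otherwise it reads $q_i(q_j - q_i) = 0$; since $q_i > 0$, both cases force $q_i = q_j$. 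As this holds for every pair, all $q_i$ equal some $c > 0$ (the case $N = 1$ being trivial), so $\Qcombi = cB$ with $B$ binary and well-partitioned, which is (i).

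The main obstacle is precisely this last step: extracting a pointwise equality $q_i = q_j$ from the ``for all $A$'' hypothesis. This requires choosing test matrices that isolate individual edges, carefully tracking whether the two endpoints lie in the same supernode (the two cases yield different-looking but equally conclusive constraints), and invoking strict positivity of the entries of a well-partitioned $\Qcombi$. Everything preceding it — the reduction in the first paragraph, the diagonality of $\Qcombi^\top D(A)\Qcombi$, and the forward implication — is routine bookkeeping.
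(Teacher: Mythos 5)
Your proof is correct, but note that the paper does not actually prove this lemma itself: it imports it verbatim from \cite{loukas2019graph} (Prop.~7 there), and only the generalization to $\Delta$-Laplacians (Lem.~\ref{lem:consistency}) is proved in App.~\ref{app:proofconsist}, using Lem.~\ref{lem:loukas} as a black box. So what you have produced is a self-contained elementary proof of the cited result. Your argument checks out at every step: since $\Qcombi^\top \Lcombi(A)\Qcombi = \Qcombi^\top D(A)\Qcombi - A_c$ and $\Lcombi(A_c) = D(A_c) - A_c$, condition (ii) for a fixed $A$ is indeed equivalent to $D(A_c) = \Qcombi^\top D(A)\Qcombi$, and because $\Qcombi$ is well-partitioned the right-hand side is diagonal, so the condition collapses to the per-supernode identity $\sum_{i \in S_k} q_i \sum_j A_{ij}(q_j - q_i) = 0$; the forward direction is then immediate, and your use of the quantifier over $A$ via single-edge test matrices $e_ie_j^\top + e_je_i^\top$ correctly forces $q_i = q_j$ in both the same-supernode case (where the constraint reads $-(q_i-q_j)^2=0$) and the cross-supernode case (where it reads $q_i(q_j-q_i)=0$). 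Two cosmetic remarks: you only need $q_i \neq 0$ (guaranteed by the well-partitioned definition), not strict positivity, and the resulting common value $c$ need only be nonzero for $\Qcombi = cB$ to be proportional to a binary matrix. Compared with simply citing Loukas, your route has the merit of exhibiting explicitly where the ``for all $A$'' quantifier is consumed, which is also the mechanism underlying the paper's own proof of the normalized extension in Lem.~\ref{lem:consistency}.
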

In other words, the Laplacian of the coarsened graph can be defined by the same equation as \eqref{eq:ac}. In light of this, matrices $\Qcombi$ in this paper will always be well-partitioned and binary. This results in a particularly interpretable $A_c$: a weighted edge between two supernodes has a value equal to the sum of the weights of all the edges between the two groups of original nodes.

\paragraph{Lifting, Reduction, and spectral quality measure for Graph Coarsening} The quality of a coarsening can be assessed from a signal processing point of view \cite{loukas2019graph}. Indeed, one way to interpret $\Qcombi$ is that it can be used to ``lift'' a signal $y \in \mathbb{R}^n$ from the coarsened graph to the original one, as $x = \Qcombi y$, and is thus called the \textbf{lifting matrix} in the literature. Its counterpart is a \textbf{reduction matrix} $P \in \mathbb{R}^{n \times N}$ that reduces a signal from $G$ to $G_c$. More formally, let $x \in \mathbb{R}^N$ be a signal over the nodes of $G$. The coarsened signal $x_c\in \mathbb{R}^{n}$ and the re-lifted signal $\tilde x \in \mathbb{R}^{N}$ are defined by
\begin{equation}\label{eq:xc}
    x_c = P x, \qquad \tilde x = \Qcombi x_c = \Pi x
\end{equation}
%
where $\Pi = \Qcombi P$. To measure the quality of the coarsening, a popular criterion introduced by Loukas \cite{loukas2019graph}, is then the \emph{Restricted Spectral Approximation} (RSA), which measures the loss of information from $x$ to $\tilde x$. 
Since $\Pi$ is at most of rank $n <N$, only a subspace $\preservedspace$ of $\mathbb{R}^N$ may be preserved. This leads to the definition of the RSA constant below. 
\def\RSAconst{\epsilon_{L,Q,P,\preservedspace}}
\begin{definition}[Restricted Spectral Approximation constant]
    Consider a subspace $ \preservedspace \subset \mathbb{R}^N$, a Laplacian $\Lcombi$, a lifting matrix $\Qcombi$, a reduction matrix $P$. 
    The \emph{RSA constant} $\epsilon_{\Lcombi,\Qcombi,P,\preservedspace}$ is defined as
    \begin{equation}\label{eq:rsa}
        \epsilon_{\Lcombi,\Qcombi,P,\preservedspace} = \sup_{x\in \preservedspace, \|x\|_\Lcombi =1} \lVert x - \Qcombi P x  \rVert_{\Lcombi}
    \end{equation}
\end{definition}
Classically, the preserved subspace $\preservedspace$ is spanned by the eigenvectors of the first eigenvalues of $\Lcombi$. In this case, the RSA constant can be used to bound the deviation between the spectrums of $\Lcombi$ and $\Lcombi_c$. 
When $\preservedspace$ is an eigen-subspace of $\Lcombi$, the RSA has an explicit expression \cite{loukas2019graph}: $ \epsilon_{\Lcombi,\Qcombi,P,\preservedspace} = \lVert \Lcombi^{1/2}(I_N - P\Qcombi)VV^T\Lcombi^{+1/2}\rVert_2 $ where $V$ is an orthogonal basis of $\preservedspace$ and $\lVert \cdot \rVert_2$ is the spectral norm. Note that this expression is convex in $P$, which will be useful for optimization. This definition slightly differs from \cite{loukas2019graph}, as it disentangles the roles of the lifting matrix $\Qcombi$ and the reduction matrix $P$, while in \cite{loukas2019graph}, $\Qcombi$ was fixed as the Moore Penrose inverse of $P$.

\paragraph{$\Qcombi$ is more constrained than $P$} One might have noticed that $P$ and $\Qcombi$ do not exactly play symmetric roles. Indeed, and this is the first key message of the paper: as shown by \eqref{eq:ac} and Lem.~\ref{lem:loukas}, \textbf{the matrix $\Qcombi$ alone fully characterizes the graph coarsening}, and \emph{the matrix $\Qcombi$ alone must respect strong constraints} (it must be well-partitioned and binary). Technically, the reduction matrix $P$ does not play any role in the definition of $A_c$ or $\Lcombi_c$.

However, $P$ still plays a role in the computation of the RSA constant. As mentioned in the introduction, in virtually every formulations of graph coarsening, $P$ is taken as the Moore-Penrose pseudo-inverse $P = \Qcombi^+$. For a well-partitioned, binary $\Qcombi$, this matrix has the same support as $Q^\top$, with rows that contain only coefficients $1/n_k$ where $n_k$ is the size of the $k$th supernode. In this paper, we challenge this choice and argue that \emph{there is no real reason for it}.
As we will see, there is a relative degree of freedom in designing $P$, and this is our second main message: \emph{for a fixed $\Qcombi$, the matrix $P$ can be optimized to improve the RSA constant}. 
Of course, this optimization may still satisfy important constraints in terms of interpretability, feasibility, memory footprint, or just simplicity, and this forms the main questions mentioned in the introduction: given a well-partitioned and binary lifting matrix, what are the ``valid'' reduction matrices? Is there a more ``optimal'' choice to minimize the RSA?

%

\paragraph{Normalized Laplacian matrices} 
Before moving on to the next section, we adapt the previous discussion to a broader notion of \emph{``normalized'' Laplacian},
that we call $\diagmatrix$-Laplacian, defined as 
\begin{equation}\label{eq:delta_l}
    \Lnorm = \Lnorm(A) = \diagmatrix \Lcombi \diagmatrix  
\end{equation}
where $\Lcombi = \Lcombi(A)$ is the combinatorial Laplacian, and $\diagmatrix = \diagmatrix(A) \in\mathbb{R}^{N\times N} $ a strictly positive diagonal matrix that depends on the adjacency matrix. 
This $\diagmatrix$-Laplacian thus encompasses the combinatorial Laplacian when $\diagmatrix = I_N$, or the classical normalized Laplacian when $\diagmatrix = {D}^{-1/2}$. Another interesting example is the self-loop normalized Laplacian $\diagmatrix  = {(D + I_N)}^{-1/2}$, which is such that $\Lnorm = I_N -S$ where $S$ is the propagation matrix of the classical Graph convolution network defined by Kipf \cite{kipf2016GCN}. 


To extend the definition of the RSA constant \eqref{eq:rsa} to the case of the generalized $\diagmatrix$-Laplacian, it is first necessary to ensure that the norms used in the original and coarsened graphs, $G$ and $G_c$, are comparable \cite[Corollary 12]{loukas2019graph}. This requires establishing the consistency of the $\diagmatrix$-Laplacian matrices of $G$ and $G_c$.
This is shown with the following lemma, provided that, starting from any binary well-partitioned lifting matrix $\Qcombi \in \mathbb{R}^{N \times n}$, a generalized lifting matrix is constructed as  
\begin{equation} \label{eq:qnorm}
    \Qnorm = \Qnorm(A, \Qcombi) := \diagmatrix^{-1} \Qcombi \diagmatrix_c.
\end{equation}
where $\diagmatrix_c = \diagmatrix(A_c)$ with $A_c$ defined in \eqref{eq:ac}. Note that $\Qnorm$ is also well-partitioned when $\Qcombi$ is well-partitioned, however it is generally not binary. Instead, the constraint still lies on $\Qcombi$, as shown below. 
\begin{lemma}[Consistency, adaptation \cite{loukas2019graph}]
\label{lem:consistency}
Let $\Qcombi$ be a well-partitioned lifting matrix. The two following properties are equivalent:
\begin{enumerate}[label=\alph*)]
    \item $\Qcombi$ is proportional to a binary matrix.
    \item For all adjacency matrices $A$, we have $L(A_c) = \Qnorm(A,\Qcombi)^\top L(A) \Qnorm(A, \Qcombi)$, where we recall that $L$ is defined in \eqref{eq:delta_l} and $\Qnorm$ in \eqref{eq:qnorm}.
\end{enumerate}
\end{lemma}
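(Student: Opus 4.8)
The plan is to reduce property b) to property (ii) of Lemma~\ref{lem:loukas} by a direct algebraic manipulation, exploiting that $\diagmatrix = \diagmatrix(A)$ and $\diagmatrix_c = \diagmatrix(A_c)$ are strictly positive diagonal matrices, hence symmetric and invertible, and commute with their own inverses.

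First I would fix an arbitrary adjacency matrix $A$, set $A_c = \Qcombi^\top A \Qcombi$ as in \eqref{eq:ac}, and expand the right-hand side of b) using the definitions \eqref{eq:delta_l} and \eqref{eq:qnorm}:
\begin{equation*}
\Qnorm^\top \Lnorm(A)\, \Qnorm = \diagmatrix_c \Qcombi^\top \diagmatrix^{-1}\big(\diagmatrix \Lcombi(A)\diagmatrix\big)\diagmatrix^{-1}\Qcombi \diagmatrix_c = \diagmatrix_c\, \Qcombi^\top \Lcombi(A)\, \Qcombi\, \diagmatrix_c ,
\end{equation*}
the inner factors $\diagmatrix^{-1}\diagmatrix$ and $\diagmatrix\diagmatrix^{-1}$ cancelling. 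On the other hand $\Lnorm(A_c) = \diagmatrix_c \Lcombi(A_c)\diagmatrix_c$ by \eqref{eq:delta_l}. Thus b) for this particular $A$ reads $\diagmatrix_c\big(\Lcombi(A_c) - \Qcombi^\top \Lcombi(A)\Qcombi\big)\diagmatrix_c = 0$, and since $\diagmatrix_c$ is invertible this is equivalent to $\Lcombi(A_c) = \Qcombi^\top \Lcombi(A)\Qcombi$.

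Quantifying over all $A$, property b) is therefore equivalent to ``for all $A$, $\Lcombi(A_c) = \Qcombi^\top \Lcombi(A)\Qcombi$'', which is exactly property (ii) of Lemma~\ref{lem:loukas}; applying that lemma, this is equivalent to $\Qcombi$ being proportional to a binary matrix, i.e. property a). For completeness I would also note the elementary fact mentioned just before the statement, that $\Qnorm = \diagmatrix^{-1}\Qcombi\diagmatrix_c$ inherits the well-partitioned structure of $\Qcombi$: left/right multiplication by positive diagonal matrices merely rescales entries and neither creates nor destroys zeros, so each row of $\Qnorm$ keeps exactly one non-zero entry.

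I do not expect a real obstacle here: the whole content is the bookkeeping that the conjugating diagonal factors cancel and that $\diagmatrix_c$ can be stripped from both sides. The only points needing a little care are (i) the order of quantifiers — one must remove $\diagmatrix_c$ for each \emph{fixed} $A$ before forming the ``for all $A$'' statement, so that the equivalence with Lemma~\ref{lem:loukas} is with its full quantified version — and (ii) checking that $\diagmatrix_c = \diagmatrix(A_c)$ is well-defined, which holds because $A_c = \Qcombi^\top A \Qcombi$ is a genuine nonnegative symmetric adjacency matrix whenever $A$ is, so $\diagmatrix(\cdot)$ applies and returns a strictly positive diagonal matrix by assumption.
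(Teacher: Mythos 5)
Your proposal is correct and follows essentially the same route as the paper's proof: expand $\Qnorm^\top L(A)\Qnorm$ so the $\diagmatrix^{-1}\diagmatrix$ factors cancel, strip the invertible $\diagmatrix_c$ from both sides to reduce property b) to $\Lcombi(A_c)=\Qcombi^\top\Lcombi(A)\Qcombi$ for all $A$, and invoke Lemma~\ref{lem:loukas}. The paper writes the two implications separately while you phrase it as one chain of equivalences, but the computation and the key appeal to Loukas's lemma are identical.
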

Hence, the normalized Laplacian of the coarsened graph can again be directly deduced from the normalized Laplacian of the original graph when adopting the generalized lifting matrix $\Qnorm = \Qnorm(A,\Qcombi)$ with a well-partitioned and binary $\Qcombi$.
This consistency then enables the definition of a generalized RSA constant 
\begin{equation}\label{eq:rsanorm}
    \epsilon_{\Lnorm,\Qnorm,P,\preservedspace} = \sup_{x\in \preservedspace, \|x\|_\Lnorm =1} \lVert x - \Qnorm P x  \rVert_{\Lnorm}
\end{equation}
where, again, we emphasize that $L$ is defined in \eqref{eq:delta_l} and $\Qnorm$ in \eqref{eq:qnorm}. The main question that we will examine in the rest of the paper remains: for a fixed $\Qcombi$ that is well-partitioned and binary, what are the ``valid'' reduction matrices $P$, and is there a more ``optimal'' choice to minimize $\RSAconst$?

\section{Expanding the space of reduction matrices}
\label{sec:pset}

In this section, we examine the first of the above questions: 
what are the ``valid'' reduction matrices $P$? One might be tempted to simply minimize the RSA equation \eqref{eq:rsanorm} over all  $P$ matrices, however this discards important properties of graph coarsening that we want to preserve. To address this, we introduce several ensembles of admissible $P$ matrices and derive key properties that will enable us, in Sec.~\ref{sec:pexamples}, to optimize $P$ and to compare several examples. 

In the beginning of this section, $Q$ will indicate any well-partitioned matrix. We note that, unlike all the matrices considered in the literature, \emph{some matrices $P$ with a support different from $Q^\top $ will be acceptable}. Intuitively, $\Qnorm$ defines the ``true'' mapping between the nodes and the supernodes by enforcing the well-partitioned aspect, while $P$ will be allowed to relax this constraint.
We begin by examining the largest such ensemble, denoted $E_1$.

%
%

\paragraph{$E_1$: $P$ such that $\Pi$ is a projection.}
A first minimal property in graph coarsening is that \emph{applying successively coarsen and lift procedures does not degrade further the signal}, or in other words, the coarsen and lift operator $\Pi = QP$ is a projection $\Pi^2 = \Pi$. 

The classical choice, where $P$ is the Moore Penrose inverse of $Q$, satisfies the projection property -- $\Pi$ is even an \emph{orthogonal} projector in this case. However it is only an example among a bigger set of reduction matrices which satisfies $\Pi^2=\Pi$. To characterize this larger set, we first recall the notion of \emph{generalized inverse}.


\begin{definition}[Generalized Inverse]\label{def:inv}
Let \( A \in \mathbb{K}^{m \times n} \). We consider a matrix $B \in \mathbb{R}^{n \times m}$ which can satisfy the following conditions:

\begin{enumerate}[label=(\roman*)]
    \item \label{cond1} \( B \in A^g \quad with \quad  A^g := \{ M \, | \, AMA = A \} \)
    \item \label{cond2} \( A \in B^g \quad i.e \quad  BAB = B\)
    \item \label{cond3} \( AB \) is Hermitian: \( \left(AB\right)^* = AB \).
    \item \label{cond4} \( BA \) is Hermitian: \( \left(BA\right)^* = BA \).
\end{enumerate}
The matrix B is said to be:
\begin{itemize}[label=$\bullet$]
    \item \textbf{generalized inverse} of $A$ when it satisfies ~\ref{cond1};
    \item \textbf{generalized reflexive inverse} of $A$ if it satisfies simultaneously~\ref{cond1} and \ref{cond2};
    \item \textbf{Moore Penrose inverse} of $A$ if it satisfies ~\ref{cond1}, \ref{cond2}, \ref{cond3} and \ref{cond4}.
\end{itemize}
\end{definition}

\begin{remark}[uniqueness]
The Moore Penrose inverse is \textbf{unique} while there may exist \textbf{infinitely} many "generalized inverses" \cite{penrose1955generalized}. 
\end{remark}

\begin{wrapfigure}{r}{0.45\textwidth}
\vspace{-25pt}
\centering
\resizebox{.42\textwidth}{!}{
\tikzset{
  crossred/.style={
    path picture={
      \draw[thick, red]
        (path picture bounding box.south west) -- (path picture bounding box.north east)
        (path picture bounding box.south east) -- (path picture bounding box.north west);
    },
    inner sep=0pt,
    minimum size=6pt
  },
  crossblue/.style={
    path picture={
      \draw[thick, blue]
        (path picture bounding box.south west) -- (path picture bounding box.north east)
        (path picture bounding box.south east) -- (path picture bounding box.north west);
    },
    inner sep=0pt,
    minimum size=6pt
  },
  crossgreen/.style={
    path picture={
      \draw[thick, green]
        (path picture bounding box.south west) -- (path picture bounding box.north east)
        (path picture bounding box.south east) -- (path picture bounding box.north west);
    },
    inner sep=0pt,
    minimum size=6pt
  }
}

\begin{tikzpicture}

\filldraw[fill=gray!20, draw=black] (0,0) ellipse (3.5 and 2);
\node at (0, 1.6) {$E_1 = \{P~|~Q \in P^g \}$};

\filldraw[fill=gray!40, draw=black] (0.8,-0.1) ellipse (2 and 1.5);
\node at (0.8, 1.0) {$E_2 = Q^g$};

\filldraw[fill=gray!60, draw=black] (0.7,-0.35) ellipse (1.6 and 1.1);
\node at (0.8, 0.1) {$E_3 = Q^g_{supp}$};

\node[crossred, thick] at (-2, 0.7) {};
\node at (-2, 0.3) {$P_{opt}$};

\node[crossblue, thick] at (0.2, -0.6) {};
\node at (0.2, -1.0) {$P_{Loukas}$};

\node[crossgreen, thick] at (1.3, -0.6) {};
\node at (1.3, -1.0) {$P_{MP}$};

\end{tikzpicture}
}
\caption{\small Ensembles of admissible reduction matrices.
$E_1$ includes all $P$ satisfying only the projection constraint for 
$PQ=\Pi$. $E_2$ contains all generalized inverses of $Q$, and is shown to be a subset of $E_1$ in Lem.~\ref{lem::genrefinv}.
$E_3$ restricts $E_2$ to $P$ matrices sharing the same support as $Q^T$.}
\label{fig:schema}
\vspace{-5pt}
\end{wrapfigure}
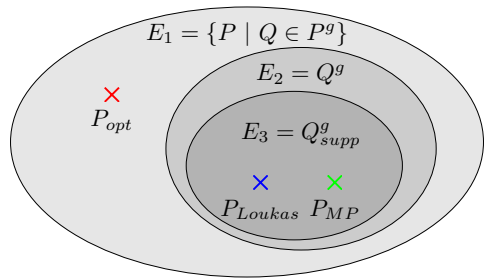
We now propose an alternative characterization of the $E_1$ ensemble.
\begin{lemma}[Generalized Inverse and $\Pi$ projection]\label{lem:piproj}
For a well-partitioned lifting matrix $Q$:
\[
 \Pi^2=\Pi  \quad \Longleftrightarrow
 \quad Q \in P^g
\]
\end{lemma}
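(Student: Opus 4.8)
The plan is to prove the two implications separately; both reduce to the algebraic identity $\Pi^2 = QPQP$, and the only structural fact about $Q$ that is needed is that a well-partitioned lifting matrix has full column rank, hence a left inverse. Recall that by Definition~\ref{def:inv}, $Q \in P^g$ means precisely $PQP = P$.

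For the direction ($\Leftarrow$), I would simply substitute: assuming $PQP = P$, compute $\Pi^2 = (QP)(QP) = Q(PQP) = QP = \Pi$. No assumption on $Q$ is used here beyond the definition $\Pi = QP$.

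For the converse ($\Rightarrow$), I would start from $\Pi^2 = \Pi$, written as $QPQP = QP$, and rearrange it as $Q(PQP - P) = 0$. The crucial step is to cancel $Q$ on the left. Since $Q$ is well-partitioned, each row has a single non-zero entry; assuming every supernode is non-empty (so that no column of $Q$ vanishes), the columns of $Q$ have pairwise disjoint supports and are therefore linearly independent, i.e. $Q$ has full column rank $n$. Hence $Q$ admits a left inverse $Q^+$ (for instance its Moore--Penrose pseudo-inverse, which satisfies $Q^+ Q = I_n$). Left-multiplying $Q(PQP - P) = 0$ by $Q^+$ gives $PQP - P = 0$, that is $PQP = P$, which is exactly $Q \in P^g$.

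The only mild obstacle is justifying that a well-partitioned $Q$ has full column rank: this requires the (standard, implicitly assumed) condition that the coarsening has no empty supernode, which is in any case needed for the coarsened graph to be well-defined. I would state this assumption explicitly. Everything else is a one-line manipulation, so I do not anticipate any technical difficulty.
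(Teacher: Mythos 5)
Your proof is correct and follows essentially the same route as the paper: the reverse direction is the same one-line substitution, and the forward direction cancels $Q$ on the left via a left inverse, which exists because a well-partitioned $Q$ has rank $n$ (the paper's Lemma on the rank of well-partitioned matrices, which indeed implicitly assumes no empty supernode, as you note). No gap to report.
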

The proof can be found in App.~\ref{app:proofpiproj}. Lem.~\ref{lem:piproj} means that, \emph{assuming that $Q$ is well-partitioned}, $E_1 = \{ P~|~\Pi \text{ projection} \}=\{ P~|~Q \in P^g \}$. Note that the hypothesis on $Q$ is important here: this lemma is not true in general, and only valid because $Q$ is particularly simple.

In general, the set $E_1$ does not admit further description, and
it seems difficult to optimize over with algorithms such as projected gradient descent. We see next that a relatively minor additional constraints considerably simplifies the situation.

\paragraph{$E_2$: $P$ generalized inverse of $Q$.} We now consider an ensemble characterized in a reverse manner of $E_1$. The rationale is that this set admits a closed form characterization, which allows for an easy implementation of optimization algorithms over it.
\begin{lemma}[Generalized reflexive inverse]
\label{lem::genrefinv}
For a well-partitioned lifting matrix $Q$ and a reduction matrix $P$ such that $Q \in P^g$, we have the following equivalence:
\[
\operatorname{rank}(P) = n \quad \Longleftrightarrow
 \quad P \in Q^g
\]
Conversely, $P \in Q^g$ implies $Q \in P^g$ and $\operatorname{rank}(P) = n$, such that $Q^g \subset E_1$. 
\end{lemma}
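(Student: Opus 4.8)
The plan is to derive both directions of the equivalence from a single stronger fact: under the stated hypotheses one actually has $PQ = I_n$. To set this up, I would first record the two rank facts that make the argument go through. Since $Q$ is well-partitioned (with no empty supernode, as is implicit throughout the paper), its $n$ columns have pairwise disjoint, non-empty supports and are therefore linearly independent, so $\operatorname{rank}(Q) = n$; in particular $Q$ admits a left inverse, concretely $Q^+ Q = I_n$. Symmetrically, any $P \in \mathbb{R}^{n\times N}$ with $\operatorname{rank}(P) = n$ has full row rank and hence a right inverse $R \in \mathbb{R}^{N \times n}$ with $PR = I_n$ (for instance $R = P^\top(PP^\top)^{-1}$).

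For the converse statement — which also gives the ``$\Leftarrow$'' of the equivalence — assume $P \in Q^g$, i.e.\ $QPQ = Q$. Left-multiplying by $Q^+$ and using $Q^+ Q = I_n$ on both sides yields $PQ = I_n$. From this, $PQP = I_n P = P$, so $Q \in P^g$; and $\operatorname{rank}(P) \geq \operatorname{rank}(PQ) = \operatorname{rank}(I_n) = n$ while trivially $\operatorname{rank}(P) \leq n$, so $\operatorname{rank}(P) = n$. This establishes $Q^g \subset E_1$ and the backward implication simultaneously.

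For the ``$\Rightarrow$'' of the equivalence, assume $Q \in P^g$, i.e.\ $PQP = P$, together with $\operatorname{rank}(P) = n$. Right-multiplying $PQP = P$ by a right inverse $R$ of $P$ gives $PQ = PQ\,(PR) = (PQP)\,R = P\,R = I_n$, and hence $QPQ = Q\,(PQ) = Q$, i.e.\ $P \in Q^g$. Combining the two directions proves the equivalence, and as a by-product we see that whenever $Q$ is well-partitioned (with non-empty supernodes) and $P$ satisfies $PQP = P$ with full rank $n$, then $PQ = I_n$, so $P$ is automatically a genuine left inverse of $Q$ (a reflexive generalized inverse in the sense of Definition~\ref{def:inv}).

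The argument is otherwise one line of matrix algebra in each direction; the one point I would be careful to state explicitly is the full-column-rank hypothesis on $Q$. Without it — if some supernode were empty, so that $Q$ had a zero column — the left inverse $Q^+$ with $Q^+Q = I_n$ would not exist and the converse implication genuinely fails, so the lemma must be read for the non-degenerate well-partitioned $Q$ used throughout. One could alternatively route ``$QPQ = Q \Rightarrow \Pi^2 = \Pi \Rightarrow Q \in P^g$'' through Lemma~\ref{lem:piproj}, but the direct passage via $PQ = I_n$ is shorter and self-contained.
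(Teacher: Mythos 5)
Your proof is correct and follows essentially the same route as the paper's: both arguments hinge on $\operatorname{rank}(Q)=n$ giving a left inverse of $Q$ and $\operatorname{rank}(P)=n$ giving a right inverse of $P$, from which one derives $PQ=I_n$ and hence each implication. The only cosmetic difference is that the paper detours through Lemma~\ref{lem:piproj} ($\Pi^2=\Pi$) where you manipulate the identities $QPQ=Q$ and $PQP=P$ directly, and your explicit remark that well-partitioned $Q$ must have no zero column is a fair clarification of the paper's implicit assumption in Lemma~\ref{lem:wellmap}.
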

Again, this proof relates specifically to well-partitioned matrices $Q$. We thus define $E_2 := Q^g$, and the lemma shows that $E_2 \subset E_1$ (Fig.~\ref{fig:schema}). The inclusion is often strict, as there generally exists $P$ such that $Q \in P^g$ but such that $\operatorname{rank}(P)<n$. The proof of this lemma can be found in App.~\ref{app:prfreflexinv}.
\begin{remark}[Generalized inverse $\Rightarrow$ reflexive] In Lem.~\ref{lem::genrefinv} we show that, for well-partitioned $Q$, generalized inverses of $Q$ are automatically \emph{reflexive} generalized inverses. Of course, this is not true in general, here this is again due to the fact that $Q$ is well-partitioned.
\end{remark}
As hinted above, the set $E_2$ is far easier to describe than $E_1$.
\begin{lemma}[Characterization of generalized reflexive inverses of $Q$]
    \label{lem:caracginv}
    Let $Q \in \mathbb{R}^{N \times n}$ be a well-partitioned lifting matrix. All the reflexive generalized inverses of $Q$ can be characterized as: 
    \[
    E_2 = Q^g =\{ Q^+  + M\,(I_N - QQ^+) ~|~ M \in \mathbb{R}^{n \times N}\}
    \]
    with $Q^+$ the Moore Penrose inverse of $Q$.
\end{lemma}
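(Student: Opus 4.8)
The plan is to prove the two inclusions separately, using the earlier characterization of $E_2$ as $\{P \mid P \in Q^g\}$ together with the known structure theory of generalized inverses specialized to the well-partitioned case. For a well-partitioned $Q$, the columns of $Q$ have disjoint supports, so $Q^\top Q$ is a diagonal matrix with strictly positive entries $n_k$ (the supernode sizes), hence invertible; consequently $Q$ has full column rank $n$, $Q^+ = (Q^\top Q)^{-1} Q^\top$, and $Q^+ Q = I_n$. This last identity is the key algebraic fact that makes the well-partitioned case so clean, and I would state it up front.

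For the inclusion ``$\supseteq$'': given any $M \in \mathbb{R}^{n\times N}$, set $P = Q^+ + M(I_N - QQ^+)$. I would verify directly that $P \in Q^g$, i.e. that all four Penrose-type conditions required for a reflexive generalized inverse hold — or, more economically, invoke Lem.~\ref{lem::genrefinv}, which says that for well-partitioned $Q$ it suffices to check $QPQ = Q$ and $\operatorname{rank}(P) = n$. The first is immediate: $QPQ = QQ^+Q + QM(I_N - QQ^+)Q = Q + QM(Q - QQ^+Q) = Q$, since $Q^+Q = I_n$ forces $QQ^+Q = Q$ and hence $(I_N - QQ^+)Q = 0$. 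For the rank, note $PQ = Q^+Q + M(I_N - QQ^+)Q = I_n$, so $P$ is a left inverse of $Q$ and therefore $\operatorname{rank}(P) \ge n$; combined with $P \in \mathbb{R}^{n\times N}$ this gives $\operatorname{rank}(P) = n$. By Lem.~\ref{lem::genrefinv}, $P \in Q^g = E_2$.

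For the inclusion ``$\subseteq$'': take any $P \in Q^g$, so $QPQ = Q$ and (by Lem.~\ref{lem::genrefinv}) $\operatorname{rank}(P) = n$ and $PQP = P$. From $QPQ = Q$ and $Q^+ Q = I_n$, left-multiplying by $Q^+$ gives $PQ = Q^+ Q = I_n$, so every $P \in Q^g$ is in fact a left inverse of $Q$. Now write $P = P(QQ^+) + P(I_N - QQ^+) = Q^+ + P(I_N - QQ^+)$, using $PQ = I_n$ to simplify the first term as $PQQ^+ = Q^+$. Taking $M := P$ exhibits $P$ in the claimed form. This shows $E_2 \subseteq \{Q^+ + M(I_N - QQ^+) \mid M \in \mathbb{R}^{n\times N}\}$, completing the characterization.

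The only genuinely delicate point is making sure the well-partitioned hypothesis is used correctly: the clean identity $Q^+ Q = I_n$ (equivalently, $Q$ has full column rank) is what collapses the usual two-parameter family of reflexive generalized inverses $\{Q^+ + M(I_N - QQ^+) - Q^+Q N \mid M, N\}$ down to a one-parameter family, and what lets ``generalized inverse'' upgrade automatically to ``reflexive generalized inverse of rank $n$'' as asserted in Lem.~\ref{lem::genrefinv}. I would therefore make the rank-$n$ fact explicit at the start and cite Lem.~\ref{lem::genrefinv} for the reflexivity, rather than re-deriving it; the rest is the short direct computation above. One should also double-check that no element is double-counted or lost, i.e. that the map $M \mapsto Q^+ + M(I_N - QQ^+)$ is well-defined as a surjection onto $E_2$ — which it is, though it is not injective since $M$ and $M' = M + R QQ^+$ give the same $P$.
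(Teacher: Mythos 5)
Your proof is correct, but it takes a different route from the paper. The paper does not argue by double inclusion: it invokes the classical characterization theorem of Penrose/Rao--Mitra (all generalized inverses of $A$ are of the form $A^{g} + U - A^{g}AUAA^{g}$ for arbitrary $U$), instantiates it with $A = Q$ and the particular inverse $Q^{+}$, and then uses $Q^{+}Q = I_n$ to collapse the formula to $Q^{+} + M(I_N - QQ^{+})$; reflexivity is then obtained exactly as you do, from $QPQ = Q \Rightarrow \Pi^2 = \Pi \Rightarrow Q \in P^g$ via Lem.~\ref{lem:piproj}/\ref{lem::genrefinv}. You instead re-derive the special case by hand: the ``$\supseteq$'' inclusion by checking $QPQ = Q$ and $PQ = I_n$ directly, and the ``$\subseteq$'' inclusion via the decomposition $P = PQQ^{+} + P(I_N - QQ^{+}) = Q^{+} + P(I_N - QQ^{+})$, taking $M = P$. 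Your version is more elementary and self-contained (no external theorem needed, and the surjectivity of $M \mapsto Q^{+} + M(I_N - QQ^{+})$ is made explicit by the choice $M = P$), at the cost of a slightly longer computation; the paper's version is shorter but leans on the cited result. One small imprecision: you paraphrase Lem.~\ref{lem::genrefinv} as saying ``it suffices to check $QPQ = Q$ and $\operatorname{rank}(P) = n$,'' whereas that lemma's equivalence is stated under the hypothesis $Q \in P^g$; this is harmless here, since $QPQ = Q$ is by definition $P \in Q^g = E_2$, and reflexivity then follows from the converse part of Lem.~\ref{lem::genrefinv}, but the membership claim does not actually need the rank argument.
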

These two lemmas are important because they provide a way to optimize $P$. Indeed, by the converse of Lem.~\ref{lem::genrefinv}, it is shown that $E_2$ (being a subset of $E_1$) contains admissible matrices. Moreover, Lem.~\ref{lem:caracginv} (proven in App.~\ref{app:caracginv}) offers a convenient characterization of $E_2$ through the matrix $M$. 


\paragraph{$E_3$: $P$ generalized reflexive inverse with same support}
Up until now, matrices in $E_1$ and $E_2$ have no reason to have the same support as $Q^+$, unlike for instance the Moore-Penrose inverse. Worse, matrices in $E_2$ may be very dense, which might hinder computation time and increase memory usage. Therefore, we consider sparser $P$, and add the constraint that $P$ has the same support as $Q^\top$, while still being a reflexive inverse. By construction, $E_3 \subset E_2$, as shown in Fig.~\ref{fig:schema}. Moreover, $E_3$ is not empty, as it contains at least the Moore-Penrose inverse (see next section).
\begin{lemma}[Generalized reflexive inverse with same support]
\label{lem:caracsupp}
Let $Q = \Qnorm(A,\Qcombi) \in \mathbb{R}^{N \times n}$ be generalized lifting matrix with $\Qcombi$ well-partitioned and binary. 
The set of reflexive generalized inverse of $\Qnorm$ with the same support as $\Qnorm^\top$ is defined as : 
\[
    E_3  =\left\{ P \in \mathbb{R}^{n\times N} \;\middle|\;
    \begin{cases}
        supp(P) = supp(Q^\top ) \\
         \sum_{k = 1}^N \frac{P_{ik}}{\diagmatrix(k)}  = \frac{1}{\diagmatrix_c(i)} \quad \forall i \in [1,n] 
    \end{cases}
    \right\}
\]
\end{lemma}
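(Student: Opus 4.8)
The plan is to characterize $E_3$ by combining the support constraint with the defining equations of a reflexive generalized inverse, and to show that for a well-partitioned $Q = \Qnorm(A,\Qcombi)$ the combination collapses to the simple scalar conditions stated. First I would set up notation: since $\Qcombi$ is well-partitioned and binary, let $\pi : [1,N] \to [1,n]$ be the partition map, so that $\Qcombi_{k,i} = 1$ iff $\pi(k) = i$, and by \eqref{eq:qnorm} the nonzero entries of $Q = \diagmatrix^{-1}\Qcombi\diagmatrix_c$ are $Q_{k,\pi(k)} = \diagmatrix_c(\pi(k))/\diagmatrix(k)$. Hence $\operatorname{supp}(Q^\top)$ consists exactly of the pairs $(i,k)$ with $\pi(k)=i$; imposing $\operatorname{supp}(P) = \operatorname{supp}(Q^\top)$ means $P_{i,k} = 0$ whenever $\pi(k) \neq i$. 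The key structural observation is that, because of this block structure, the products $PQ$ and $QP$ have a very constrained form: $PQ \in \mathbb{R}^{n\times n}$ has entries $(PQ)_{ij} = \sum_k P_{ik} Q_{kj} = \sum_{k : \pi(k)=j} P_{ik} Q_{kj}$, which under the support constraint is zero unless $i=j$, so $PQ$ is automatically diagonal with $(PQ)_{ii} = \sum_{k:\pi(k)=i} P_{ik}\, \diagmatrix_c(i)/\diagmatrix(k)$.

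Next I would use the reflexive-inverse conditions. By Lem.~\ref{lem::genrefinv}, $P \in Q^g$ is equivalent (for well-partitioned $Q$) to $QPQ = Q$ together with $\operatorname{rank}(P)=n$, and is in fact a reflexive inverse. The condition $QPQ = Q$: applying $Q$ on the left is injective on the column space, so $QPQ = Q$ is equivalent to $PQ = I_n$ on the relevant subspace — more precisely, since $Q$ has full column rank $n$, $QPQ=Q \iff PQ = I_n$. Combined with the previous paragraph, $PQ$ is diagonal, so $PQ = I_n$ is exactly the $n$ scalar equations $\sum_{k:\pi(k)=i} P_{ik}\,\diagmatrix_c(i)/\diagmatrix(k) = 1$ for each $i$, i.e. $\sum_{k=1}^N P_{ik}/\diagmatrix(k) = 1/\diagmatrix_c(i)$ (the sum over all $k$ equals the sum over $k$ with $\pi(k)=i$ by the support constraint). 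This is precisely the second condition in the statement. It remains to check that these conditions already imply $PQP=P$ and $\operatorname{rank}(P)=n$: $PQ = I_n$ gives $PQP = P$ immediately, and $PQ=I_n$ forces $\operatorname{rank}(P)\geq n$, hence $=n$ since $P\in\mathbb{R}^{n\times N}$. So any $P$ satisfying the two displayed conditions lies in $Q^g$ and has the right support, and conversely any $P \in Q^g$ with $\operatorname{supp}(P)=\operatorname{supp}(Q^\top)$ satisfies them.

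The main obstacle I anticipate is being careful about the equivalence $QPQ = Q \iff PQ = I_n$ and making sure nothing is lost: one must use that $Q$ has full column rank (true since $\Qcombi$ is well-partitioned with no empty supernode and $\diagmatrix,\diagmatrix_c$ are strictly positive), and separately verify that the support constraint does not conflict with $PQ=I_n$ (it does not, since each diagonal entry $(PQ)_{ii}$ involves a nonempty set of free variables $\{P_{ik} : \pi(k)=i\}$, the $i$th supernode being nonempty). A secondary point is to confirm that no fourth condition sneaks in — i.e. that $E_3$ as defined via "reflexive generalized inverse with same support" is genuinely cut out by just $\{P\in Q^g\}\cap\{\operatorname{supp}(P)=\operatorname{supp}(Q^\top)\}$, which is immediate from the definition of $E_3$. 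Once the block structure is written out explicitly, the remaining computations are routine bookkeeping, so I would keep the proof short and emphasize the structural reduction to $PQ = I_n$.
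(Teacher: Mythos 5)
Your proposal is correct and follows essentially the same route as the paper's proof: both reduce the characterization to the observation that the support constraint forces $PQ$ to be diagonal, identify membership in $Q^g$ with $PQ = I_n$ (the paper via Lem.~\ref{lem::pqin}, you by re-deriving it from the full column rank of $Q$), and translate the diagonal entries $(PQ)_{ii}=1$ into the stated normalization $\sum_k P_{ik}/\diagmatrix(k) = 1/\diagmatrix_c(i)$. The only difference is presentational — you inline the auxiliary lemmas the paper cites — so nothing further is needed.
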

Note that, while Lem.~\ref{lem:piproj}, \ref{lem::genrefinv} and \ref{lem:caracginv} were valid for \emph{any} well-partitioned matrix $\Qnorm$, here we specifically examine $Q = \Qnorm(A,\Qcombi)$ with $\Qcombi$ well-partitioned \emph{and} binary.
%
Moreover, all matrices with non-zero coefficients on the support of $\Qnorm^\top$ that are also in $E_1$ are in $E_2$ and $E_3$, as their rank is equal to $n$. 
To provide a better intuition about Lem.~\ref{lem:caracsupp}, consider the case of the combinatorial Laplacian ($\diagmatrix = I_N$) where $Q = \Qcombi$; this lemma reduces to $\sum_{k = 1}^N P_{ik} = 1$, which appears to be a natural condition already used in \cite{bianchi2023expressive}.

The lemma is proved in App.~\ref{app:caracsupp}. 
Note that 
optimizing over $E_3$ is particularly light compared to the previous dense examples, as it requires optimizing only over the $N$ non-zero coefficients located on the support of $Q^\top$. Projected gradient descent can be implemented with a simple renormalization of the rows at each iteration.

\section{From classical to novel reduction matrices: a comparative study}
\label{sec:pexamples}

Now that we have proposed a taxonomy of the valid reduction matrices $P$, we investigate the second question raised in Sec.~\ref{sec:rappel}: what are good examples of reduction matrices? In all this section, we consider $Q = \Qnorm(A,\Qcombi)$ with $\Qcombi$ well-partitioned and binary. As outlined in Sec.~\ref{sec:rappel}, $L_c = Q^\top L Q$ is then the $\Delta$-Laplacian of the coarsened graph. We will start with three examples with closed-form analytic expression, then outline a possible optimization framework, emphasizing that it is only one choice among many possible. It is worth noting that the transposed matrix $P= Q^\top$, commonly used in graph pooling, does not belong to $E_1$, and is therefore not considered in our analysis.

 
%

\paragraph{Moore-Penrose Reduction.}\label{para:MorePen} 
The most common choice for $P$ in the literature \cite{loukas2019graph, dickens2024graph, kumar2023featured}, is simply the Moore Penrose inverse of $Q$ (the derivation is in App.~\ref{app:proofmp}):
\begin{equation}
\label{eq::pmp}
P_{MP} := Q^+ = (Q^\top Q)^{-1}Q^\top 
\end{equation}
By Def.~\ref{def:inv}, $P_{MP} \in E_2$ and  has the same support as $Q^\top $ since $Q^\top Q$ is diagonal when $\Qcombi$ is well-partitioned. Thus, $P_{MP} \in E_3$.

Interestingly, $P_{MP}$ is the solution of the following optimization problem, which we note is formulated over \emph{all} matrices $P\in\mathbb{R}^{n \times N}$ without any constraint:
\begin{equation}\label{eq:optimMP}
        \arg\min_P \sup_{x\in \mathbb{R}^N, \|x\|_2 =1} \lVert x - QP x  \rVert_{2}.
\end{equation}
This problem looks suspiciously similar to the RSA \eqref{eq:rsanorm}, but differs in several key aspects. Namely, the signals lives in $\mathbb{R}^N$ not in $\preservedspace$, the Mahalanobis norm $\|\cdot\|_{L}$ in \eqref{eq:rsanorm} is replaced by the Euclidean $l_2$ norm. This suggests that $P_{MP}$ is still ``optimal'' from a certain point of view, but for a different (much simpler) problem than the RSA. Below, we will formulate an optimal matrix for a problem closer to the RSA, but first examine another potential choice in $E_3$.

\paragraph{Loukas Reduction \cite{loukas2019graph}, aka iterative coarsening.} \label{para:loukasreduc}
Many coarsening algorithms construct the lifting matrix iteratively, as a product of individual coarsening $Q = Q_1 \ldots Q_c$. In \cite{loukas2019graph}, Loukas implements such an algorithm, and chooses the reduction matrix as the product of the Moore-Penrose inverses of each lifting matrix
\begin{equation}\label{eq:ploukas}
\Ploukas = Q_c^+ \ldots Q_1^+
\end{equation}
Note that this is \emph{not} equal to the Moore-Penrose inverse of $Q$ in general. However, and as shown in App.~\ref{app:ploukas}, this results in a matrix $\Ploukas \in E_3$, (see Fig.~\ref{fig:schema}).


\paragraph{Rao and Mitra inspired reduction} 
As we have seen above, the Moore-Penrose inverse can be interpreted as the solution of an optimization problem \eqref{eq:optimMP}, that differs from the RSA \eqref{eq:rsanorm} in two key aspects: the subspace $\mathcal{R}$ and the Mahalanobis norm. We now consider the following problem, where we reintroduce the latter:
\begin{equation}\label{eq:optimRAO}
        \arg\min_P \sup_{x\in \mathbb{R}^N, \|x\|_L =1} \lVert x - QP x  \rVert_{\Lnorm}
\end{equation}
Rao and Mitra show that \eqref{eq:optimRAO} admits a unique solution under some hypotheses. In our case, several simplification over their original result happen, and we obtain the following: if \textbf{$L$ and $L_c$ are positive definite}, then the optimal solution is $P = L_c^{-1}Q^\top L$. This solution does not technically apply when $L$ and $L_c$ are $\Delta$-Laplacians since they are not invertible, 
but inspired by this, we propose the following reduction matrix:
\begin{equation}\label{eq:prao}
\Prao = L_c^+ Q^\top  L
\end{equation}
Note that, even though \eqref{eq:optimRAO} is again an optimization problem with no constraints on $P$, 
it is easy to check that $\Prao\in E_1$. However $\Prao \notin E_2$ as it is not full rank (see Fig.~\ref{fig:schema}).
Hopefully, $\Prao$ should lead to a better RSA constant than $P_{MP}$, even though $\mathcal{R}$ is still absent from \eqref{eq:optimRAO}. 
However, its main drawback is that it is dense in general. 

\paragraph{Optimization based Reduction} 
We now turn to the true RSA minimization of \eqref{eq:rsanorm}. To our knowledge, it does not have a simple solution such as $P_{MP}$ for \eqref{eq:optimMP} or $\Prao$ for \eqref{eq:optimRAO}, so that we need to implement an iterative optimization algorithm. As discussed in the previous section, it is particularly convenient to minimize over the set $E_2$, thanks to the characterisation of $E_2$ with Lem.~\ref{lem:caracginv}. The minimization can then be written as:
\begin{equation}\label{eq:optimRSAginv}
        P^*_{g} = \Phi_Q(M^*) \quad \text{with} \quad M^* = \arg\min_{M\in \mathbb{R}^{n\times N}} \sup_{x\in \preservedspace, \|x\|_L =1} \lVert x - Q\Phi_Q(M)x  \rVert_{\Lnorm}
\end{equation}
with $\Phi_Q(M) = Q^+  + M\,(I_N - QQ^+)$.
Again, solutions $P^*_{g}$ are usually dense. A potential remedy is to add a sparsity constraint on $P$, which leads to the following problem:
\begin{equation}\label{eq:optimRSAginvsparse}
        P^*_{g,l_1} = \Phi_Q(M^*_{l_1}) \quad \text{with} \quad M^*_{l_1} = \arg\min_M \sup_{x\in \preservedspace, \|x\|_L =1} \lVert x - Q\Phi_Q(M) x  \rVert_{\Lnorm} + \lambda \lVert \Phi_Q(M) \rVert_1
\end{equation} 
As mentioned before, when $\preservedspace$ is a subspace of $\Lnorm$, the RSA has a close form expression that is convex in $P$. This results in optimization problem that are convex in $M$ in these cases. In our experiments, we treat the $l_1$ penalty simply with gradient descent combined with a final thresholding operation (for simplicity we leave aside more complex optimization procedures with e.g. proximal operators).

Finally, we also mentioned that optimization over $E_3$ was also particularly simple: a simple renormalization is sufficient for projected gradient descent. It has the advantage of being always sparse, as it respects the 
support of $Q^\top$ (of size $N)$. 
This leads to the following problem:
\begin{equation}\label{eq:optimRSAsupp}
        P^*_{Q^\top} = \arg\min_{P \in E_3} \sup_{x\in \preservedspace, \|x\|_L =1} \lVert x - QP x  \rVert_{\Lnorm}
\end{equation} 
This is again convex in $P$, as the constraint $P\in E_3$ is linear.


\section{Experiments}
\label{sec:experiments}
In Sec.~\ref{sec:pexamples}, we have proposed several examples of reduction matrices $P$ that aimed to minimize the RSA with various constraints.
In this section, we evaluate numerically the performance of these examples, both in terms of RSA constant and used within GNNs.

\paragraph{Setup:} \textbf{(i) Graph.} We consider the two classical medium-scale graphs Cora \cite{mccallum2000automating}, and Citeseer \cite{giles1998citeseer}, 
and use the public split from \cite{yang2016revisiting} for training the GNNs. We restrict ourselves to medium-scale graphs because handling larger graphs presents challenges. Indeed, in the optimizations we propose, the RSA requires computing the square root of the original Laplacian, which is not sparse in general and, for graphs like Reddit \cite{hamilton2017inductive}, cannot be stored on modern GPUs 
. Furthermore, we only consider the largest connected component since connected graphs are better suited for coarsening (see details in App.~\ref{app::datasets}). This may induce some slight difference with other reported results on these datasets.

\textbf{(ii) Laplacian and preserved space.} We choose two different Laplacians: the combinatorial Laplacian $\Lcombi$ and the self-loop normalized Laplacian $\Lnorm = \diagmatrix\Lcombi\diagmatrix$  with $\diagmatrix = (\operatorname{diag}(A1_N) +1)^{-1/2}$. The motivation for this Laplacian comes from the fact that it is related to the propagation matrix $S =  \hat{D}^{-1/2}(A+I_N) \hat{D}^{-1/2} $ (via $\Lnorm = I_N - S$) commonly used in GNNs \cite{kipf2016GCN,joly2024graph}. For the RSA, the preserved space $\preservedspace$ is chosen as the $K=100$ first eigenvectors of $\Lcombi$ and $\Lnorm$. 

\textbf{(iii) Lifting matrix $\Qnorm$.} The method that has introduced the notion of RSA \cite{loukas2019graph} is particularly relevant for computing $\Qnorm$, as we share the same objective of reducing the RSA constant. However, it requires adaptation because this method minimizes the RSA constant for the combinatorial Laplacian, and  yields a binary well-partitioned matrix lifting matrix $\Qcombi$. We generalize this method to minimize the RSA constant for a  $\Delta$-Laplacian, resulting in a lifting matrix $Q$. It is worth noting that some of the assumptions made in \cite{loukas2019graph} no longer hold in the $\Delta$-Laplacian setting. Nevertheless, we observe that this generalized construction still achieves good RSA constants.
The details of the generalized algorithm are provided in App.~\ref{app:loukasalg}.

\paragraph{RSA minimization.}  The RSA constants achieved by the reduction matrices introduced in Sec. ~\ref{sec:pexamples} are shown in Fig.~\ref{fig:rsa_cora_combi} for the combinatorial Laplacian $\Lcombi$ and in Fig.~\ref{fig:rsa_cora_norm} for the self-loop normalized Laplacian $\Lnorm$. The coarsening ratios range from 0.05 to 0.85. As expected, the two proposed methods $\Prao$ and $P^*_{g}$ achieve better RSA constants than the usual $P_{MP}$ and $\Ploukas$. Moreover, the performance gap increases with higher reduction ratios. Another interesting observation is that the optimization methods $P^*_{g,l_1}$ and $P^*_{Q^\top}$ that incorporate sparsity constraints also perform very well. At low reduction ratios, their performance is nearly indistinguishable from their unconstrained counterparts $P^*_g$. This is particularly surprising given that the reduction in the number of non-zero coefficients is around $99.8\%$
, regardless of the coarsening ratios (see App.~\ref{app:nnz} for a detailed analysis on the sparsity of the $P$ matrices and App.~\ref{app:optimparam} for their computational hyperparameters). We also observe that the two approaches $P^*_{g,l_1}$ and $P^*_{Q^\top}$ yield similar performance. This may be due to the strong regularization parameter used in the sparsity penalty in $P^*_{g,l_1}$; a different setting could yield a more balanced trade-off for $P^*_{g,l_1}$ between the performance of $P^*_{g}$ and $P^*_{Q^\top}$.

\begin{figure}[htbp]
    \centering
    \begin{subfigure}[b]{0.49\textwidth}
        \includegraphics[width=\linewidth]{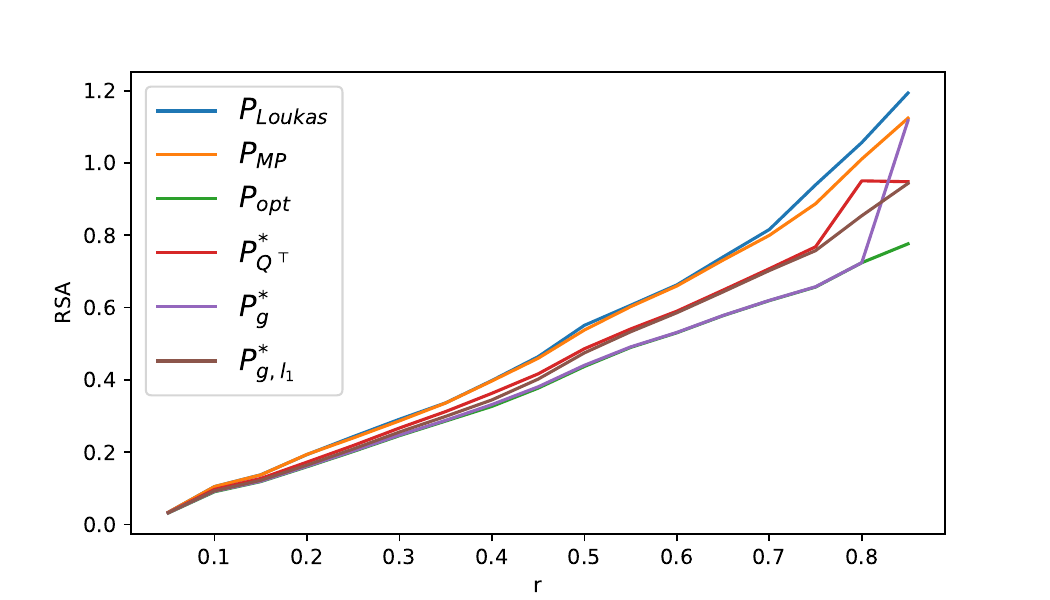}
        \caption{Cora graph, combinatorial Laplacian $\Lcombi$}
        \label{fig:rsa_cora_combi}
    \end{subfigure}
    \hfill
    \begin{subfigure}[b]{0.49\textwidth}
        \includegraphics[width=\linewidth]{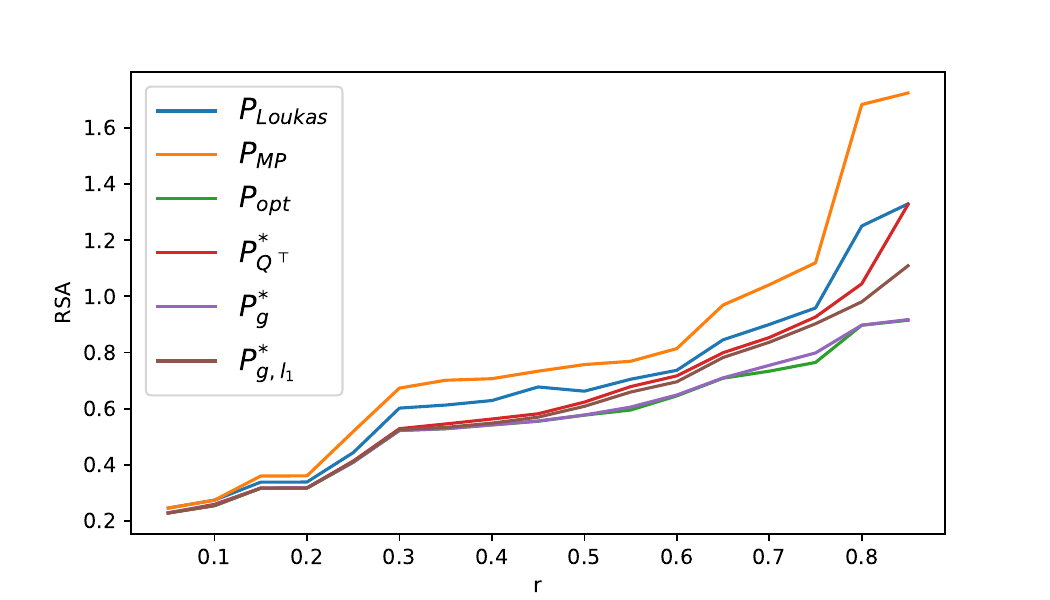}
        \caption{Cora graph, self-loop normalized Laplacian $\Lnorm$}
        \label{fig:rsa_cora_norm}
    \end{subfigure}
    \caption{RSA for different reduction matrices}
    \label{fig:rsa_cora}
\end{figure}

\paragraph{GNN application}  Inspired by the paper \cite{joly2024graph} which link the training of Graph Neural Networks (GNN) and the RSA : we have trained  for three different coarsening ratio ( $r = \{0.3, 0.5, 0.7\}$) a convolutional GNN  \cite{kipf2016GCN} and a Simplified convolution network (SGC \cite{wu2019SGC}) on Cora \cite{mccallum2000automating} and Citeseer \cite{giles1998citeseer}.  
Each training is averaged on $10$ random split, using the same experimental setting as in \cite{joly2024graph}, and the hyperparameters are provided in App.~\ref{app:gcnhyperparameters}.
For the training on coarsened graphs, we used the propagation matrix $S_c^{MP} = PSQ$ from the paper \cite{joly2024graph}, where $S - L$ and $L$ is the self-loop normalized Laplacian. This matrix depends on both the reduction matrix $P$ and the lifting matrix $Q$. 

The results reported in Tab.~\ref{tab:trainingresultsGCN} show 
slightly better performances on Cora for high coarsening ratio and RSA-optimized reduction matrices such as $P_g^*$, $P_{g,l_1}^*$ and $P_{Q^\top}^*$. The results on CiteSeer are less pronounced, which may be due to its higher level of heterophily compared to Cora: indeed, since spectral coarsening is designed to preserved the low frequencies of the Laplacian and \cite{joly2024graph} 
the RSA is more relevant when homophily is high. Lastly, we note that $\Prao$ has the best RSA but poor GNN performance. We might explain this by its \emph{density} which implies a propagation with $S_c^{MP}$ that is similar to a complete graph. The sparsity of each matrix can be found in App.~\ref{app:nnz}. Therefore, the RSA only relatively translates to a better GNN accuracy, mitigating the theoretical results of \cite{joly2024graph}. 



\begin{table}[ht]
    \vspace{-10pt}
    \centering
    \small
    \caption{Accuracy in $\%$ for node classification with SGC and GCNconv on different coarsening ratio}
    \vspace{10pt}
    \begin{tabular}{ccccccc}
        \toprule
        \multirow{2}{*}{SGC} & \multicolumn{3}{c}{Cora} & \multicolumn{3}{c}{ Citeseer } \\
        \cmidrule(lr){2-4} \cmidrule(lr){5-7} 
        $r$ & 
        $0.3$ & $0.5$ & $0.7$ & 
        $0.3$ & $0.5$ & $0.7$  \\
        \midrule
        $\Ploukas$ & 
        80.5 $\pm$ 0.0 & 79.7 $\pm$ 0.0 & 76.8 $\pm$ 0.0 & 
        72.6 $\pm$ 0.3 & 71.7 $\pm$ 0.1 & \textbf{69.7} $\pm$ 0.7  \\
        $P_{MP}$ & 
        80.5 $\pm$ 0.0 & \textbf{80.1} $\pm$ 0.0 & 77.7 $\pm$ 0.0 &  
        72.8 $\pm$ 0.5 & 72.7 $\pm$ 0.0 & 69.5 $\pm$ 0.3  \\
        
        $\Prao$ &  
        77.1 $\pm$ 0.6 & 75.9 $\pm$ 0.1 & 73.8 $\pm$ 0.3 &  
        70.9 $\pm$ 0.2 & 70.2 $\pm$ 0.1 & 67.3 $\pm$ 0.4  \\
        
        $P_{Q^\top}^{*}$ &  
        80.3 $\pm$ 0.0 & 80.0 $\pm$ 0.1 & 77.2 $\pm$ 0.0 & 
        72.7 $\pm$ 0.3 & 72.6 $\pm$ 0.5 & 67.6 $\pm$ 0.2  \\
         $P_{g}^{*}$&  
         \textbf{80.7} $\pm$ 0.0 & 80.0 $\pm$ 0.0 & 77.6 $\pm$ 0.0 & 
         72.6 $\pm$ 0.2 & \textbf{72.7} $\pm$ 0.0 & 68.6 $\pm$ 0.4  \\
         
         $P_{g,l_1}^{*}$&  
         80.4 $\pm$ 0.0 & 79.2 $\pm$ 0.0 & \textbf{78.3} $\pm$ 0.0 & 
         \textbf{73.0} $\pm$ 0.0 & 71.2 $\pm$ 0.1 & 69.2 $\pm$ 0.4 
         \\
         Full Graph &  \multicolumn{3}{c}{81.0 $\pm$ 0.1} & \multicolumn{3}{c}{71.6 $\pm$ 0.1} \\
        \specialrule{1.2pt}{0pt}{2pt}
        \multirow{2}{*}{GCN} & \multicolumn{3}{c}{Cora} & \multicolumn{3}{c}{ Citeseer } \\
        \cmidrule(lr){2-4} \cmidrule(lr){5-7} 
        $r$ & 
        $0.3$ & $0.5$ & $0.7$ & 
        $0.3$ & $0.5$ & $0.7$  \\
        \midrule
        $\Ploukas$ & 
        \textbf{80.6} $\pm$ 0.8 & 80.5 $\pm$ 1.0 & 78.1 $\pm$ 1.4 & 
        71.0 $\pm$ 1.6 & 72.2 $\pm$ 0.6 & 70.4 $\pm$ 0.8  \\
        $P_{MP}$ & 
        80.4 $\pm$ 1.0 & 80.7 $\pm$ 0.9 & 78.6 $\pm$ 0.9 &  
        70.8 $\pm$ 1.9 & 72.1 $\pm$ 1.0 & \textbf{71.0} $\pm$ 1.0  \\
        
        $\Prao$ &  
        73.7 $\pm$ 1.5 & 63.3 $\pm$ 1.4 & 55.11 $\pm$ 2.4 &  
        64.6 $\pm$ 0.7 & 50.4 $\pm$ 1.6 & 42.6 $\pm$ 4.0  \\
        
        $P_{Q^\top}^{*}$ &  
        80.5 $\pm$ 0.9 & 80.9 $\pm$ 0.6 & 78.0 $\pm$ 0.9 & 
        \textbf{71.1} $\pm$ 1.5 & \textbf{72.3} $\pm$ 0.7 & 70.0 $\pm$ 0.9  \\
         $P_{g}^{*}$&  
         \textbf{80.6} $\pm$ 1.1 & \textbf{81.3} $\pm$ 0.6 & \textbf{78.7} $\pm$ 0.9 & 
         \textbf{71.1} $\pm$ 1.7 & 72.1 $\pm$ 1.2 & 69.6 $\pm$ 1.0  \\
         
         $P_{g,l_1}^{*}$&  
         80.4 $\pm$ 0.9 & 80.0 $\pm$ 0.9 & 78.2 $\pm$ 0.7 & 
         70.2 $\pm$ 1.8 & 66.8 $\pm$ 1.1 & 66.7 $\pm$ 1.2 
         \\
         Full Graph &  \multicolumn{3}{c}{81.3 $\pm$ 0.8} & \multicolumn{3}{c}{70.9 $\pm$ 1.4}  \\
        \bottomrule
    \end{tabular}
    \label{tab:trainingresultsGCN}
    \vspace{-10pt}
\end{table}

\section{Conclusion}
In this paper, we highlighted the crucial role of the lifting matrix $\Qnorm$ in graph coarsening. Surprisingly, we found that we can take advantage of the degree of freedom over the reduction matrix to obtain better spectral guarantees, without changing the coarsening itself or the lifting matrix. 
We have defined various sets of ``admissible'' reduction matrices with different properties, from the very generic property of simply obtaining a projection $\Pi$, to convenient parametrization with reflexive generalized inverses, and support constraints.
We then showed that the classical choices of reduction matrices can be outperformed, both by well-motivated novel examples with analytic expressions, or by matrices resulting from various optimization processes with sparsity or support constraints.
%
Even if previous works linked the performances of GNNs trained on coarsened graph with the RSA, we empirically showed that the benefits of improving the RSA were somewhat marginal, although visible for high coarsening ratio and homophilous graphs. This suggests that other factors, such the sparsity of the reduction matrix, could also play in GNNs training.

In this work, we have selected the RSA as a general-purpose score to optimize, however our theoretical characterization of the admissible sets of reduction matrices does not particularly rely on it. We thus believe that considering more complex scoring function to take into account graphs heterophily or node features while being scalable to larger graphs is a major path for future works. Notably, our optimization framework is agnostic to the specific choice of scoring function and coarsening algorithm and directly applies to these extensions.

\acksection
The authors acknowledge the fundings of France 2030, PEPR IA, ANR-23-PEIA-0008 and  European Union ERC-2024-STG-101163069 MALAGA.
\begingroup
\small

\bibliography{refs}  

\endgroup


\appendix

\section{Useful property and definitions}

\begin{definition}[Left inverse]
    If the matrix \( A \) has dimensions \( m \times n \) and \(\operatorname{rank}(A) = n\), then there exists an \( n \times m \) matrix \( A_L^{-1} \), called the left inverse of \( A \), such that:
    \[
    A_L^{-1} A = I_n 
    \]
    where \( I_n \) is the \( n \times n \) identity matrix. 
\end{definition}

\begin{definition}[Right inverse]
     If the matrix \( A \) has dimensions \( m \times n \) and \(\operatorname{rank}(A) = m\), then there exists an \( n \times m \) matrix \( A_R^{-1} \), called the right inverse of \( A \), such that:
    \[
    A A_R^{-1} = I_m 
    \]
    where \( I_m \) is the \( m \times m \) identity matrix. 
\end{definition}
\begin{lemma}[rank of well-partitioned matrices]
\label{lem:wellmap}
For a well-partitioned lifting matrix $Q \in \mathbb{R}^{N \times n}$, 
\begin{equation}
    \operatorname{rank}(Q) = n
\end{equation}
\end{lemma}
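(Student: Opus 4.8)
The plan is to prove this by showing that $Q^\top Q$ is a positive definite diagonal matrix, which immediately yields $\operatorname{rank}(Q) = \operatorname{rank}(Q^\top Q) = n$. First I would unpack what it means for $Q$ to be a well-partitioned lifting matrix: each row $i$ of $Q$ has a single nonzero entry, located in some column $c(i)$ with $Q_{i,c(i)} \neq 0$. I would also make explicit the standing assumption that the coarsening maps onto $n$ \emph{non-empty} supernodes, i.e.\ that $Q$ has no zero column (every column $k$ contains at least one nonzero entry). This is the one hypothesis the statement genuinely needs, and it is implicit in the notion of a mapping onto $n$ supernodes.

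Next I would compute $Q^\top Q$ entrywise: $(Q^\top Q)_{k\ell} = \sum_{i=1}^N Q_{ik} Q_{i\ell}$. For $k \neq \ell$, every summand vanishes, because row $i$ has at most one nonzero entry, so $Q_{ik}$ and $Q_{i\ell}$ cannot both be nonzero; hence all off-diagonal entries are zero. For $k = \ell$, $(Q^\top Q)_{kk} = \sum_i Q_{ik}^2 > 0$, using that column $k$ is nonzero. Therefore $Q^\top Q \in \mathbb{R}^{n\times n}$ is diagonal with strictly positive diagonal, hence invertible. Combining $\operatorname{rank}(Q^\top Q) \le \operatorname{rank}(Q) \le n$ with $\operatorname{rank}(Q^\top Q) = n$ gives $\operatorname{rank}(Q) = n$.

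As an alternative, more combinatorial, argument I could instead exhibit an explicit full-rank $n\times n$ submatrix: for each column $k$ pick a row $i_k$ with $Q_{i_k,k}\neq 0$ (possible since no column is zero); the indices $i_1,\dots,i_n$ are pairwise distinct, since a row has a unique nonzero entry and can thus be selected for only one column, and the submatrix formed by these $n$ rows is, after a row permutation, diagonal with nonzero diagonal entries, hence invertible. Either route gives $\operatorname{rank}(Q)\ge n$, and $\operatorname{rank}(Q)\le n$ is immediate since $Q$ has $n$ columns.

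I do not expect a real obstacle here: the only point requiring care is stating the no-empty-supernode (no zero column) assumption, without which the claim is false; the remaining argument is a one-line computation of $Q^\top Q$.
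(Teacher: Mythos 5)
Your proof is correct and follows essentially the same idea as the paper's: since each row of $Q$ has a single nonzero entry, distinct columns have disjoint supports, which is exactly what drives both your Gram-matrix computation ($Q^\top Q$ diagonal and invertible) and your alternative submatrix argument, while the paper simply states directly that ``all the columns are independent.'' Your explicit remark that one must also assume no zero column (every supernode non-empty) is a point the paper leaves implicit, and it is indeed needed for the statement to hold.
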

\begin{proof}
For a well-partitioned lifting matrix $Q \in \mathbb{R}^{N \times n}$, there is only one non zero value per row. Consequently all the columns are independent and the rank of the rank of this matrix is equal to $n$.
\end{proof}
\begin{remark}
  Similarly a reduction matrix   $P \in \mathbb{R}^{n \times N}$, which has the same support as $Q^\top$ with $Q$ a well-partitioned lifting matrix is also of rank $n$.
\end{remark}

\begin{lemma}[$QQ^\top $ is diagonal]
\label{lem::qqtdiag}
Let $Q$ be a well-partitioned lifting matrix. For all matrix $P\in \mathbb{R}^{n\times N}$ with the same support as $Q^\top$, $PQ$ is diagonal.
\end{lemma}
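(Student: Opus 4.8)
The plan is to argue purely from the support structure imposed by well-partitioning, with no algebraic manipulation beyond reading off which entries of the product $PQ$ can be nonzero. First I would fix notation for the combinatorial content of the well-partitioned definition: since $Q \in \mathbb{R}^{N\times n}$ is well-partitioned, each row $i\in[1,N]$ has exactly one nonzero entry, located in some column $\sigma(i)\in[1,n]$. Equivalently $\operatorname{supp}(Q) = \{(i,\sigma(i)) \mid i\in[1,N]\}$, hence $\operatorname{supp}(Q^\top) = \{(\sigma(i),i)\mid i\in[1,N]\}$. The hypothesis on $P$ is $\operatorname{supp}(P)=\operatorname{supp}(Q^\top)$, so in particular $P_{ki}=0$ whenever $k\neq\sigma(i)$ (only the inclusion $\operatorname{supp}(P)\subseteq\operatorname{supp}(Q^\top)$ is actually used).

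Next I would expand a generic off-diagonal entry of the product. For $k\neq l$,
\[
(PQ)_{kl} = \sum_{i=1}^N P_{ki}\, Q_{il}.
\]
A summand can be nonzero only if $P_{ki}\neq 0$ and $Q_{il}\neq 0$ simultaneously; the first condition forces $k=\sigma(i)$ and the second forces $l=\sigma(i)$, hence $k=l$, contradicting $k\neq l$. Therefore every summand vanishes, $(PQ)_{kl}=0$, and $PQ$ is diagonal. For completeness I would also record the diagonal entries: $(PQ)_{kk} = \sum_{i\,:\,\sigma(i)=k} P_{ki}\,Q_{ik}$, i.e.\ a sum over exactly the original nodes assigned to supernode $k$.

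I do not anticipate a genuine obstacle: the statement is essentially a bookkeeping consequence of the ``one nonzero per row'' structure, and the only point requiring a moment of care is interpreting ``same support as $Q^\top$'' as ``the same set of nonzero positions'' (the proof in fact only needs $\operatorname{supp}(P)\subseteq\operatorname{supp}(Q^\top)$, so it also covers matrices whose support is contained in that of $Q^\top$). Note finally that specializing $P=Q^\top$ recovers the fact, used elsewhere in the paper, that $Q^\top Q$ is diagonal for well-partitioned $Q$; the quantity shown to be diagonal here is the $n\times n$ matrix $PQ$, of which $Q^\top Q$ is the instance $P=Q^\top$.
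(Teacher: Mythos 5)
Your proposal is correct and follows essentially the same support-counting argument as the paper: an off-diagonal entry of the product would require two nonzero entries in the same row of $Q$, contradicting well-partitioning. The only cosmetic difference is that you argue directly on $PQ$, whereas the paper first shows $Q^\top Q$ is diagonal and then transfers the conclusion to $PQ$ via the support equality; your version is, if anything, slightly cleaner and makes explicit that only $\operatorname{supp}(P)\subseteq\operatorname{supp}(Q^\top)$ is needed.
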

\begin{proof}
Let $P$ a reduction matrix with the same support as $Q^\top$.
 Let's show that ($Q^\top Q$ is diagonal) : 

For $i,j \in \{1\ldots N\}^2$ and $i \neq j $:
\begin{align*}
    (Q^\top Q)_{ij} &= \sum_{k=1}^N Q_{kj}Q_{ki}\\
    &= 0
\end{align*}
Indeed if one term is different from $0$, it means that two nodes $i$ and $j$ of $G_c$ have been expanded to a same node $k$ of $G$ which contradicts the well-partitioned definition.

Thus $Q^\top Q$ is diagonal and by equality of the support $PQ$ is also diagonal.
\end{proof}
\begin{lemma}[$PQ = I_n $]
\label{lem::pqin}
Let $Q$ be a well-partitioned lifting matrix. We have the following equivalence :
\[
    PQ = I_n \Longleftrightarrow P \in Q^g 
\]
\end{lemma}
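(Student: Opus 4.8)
The statement is an equivalence $PQ = I_n \Longleftrightarrow QPQ = Q$ (recalling $Q^g = \{M \mid QMQ = Q\}$), and the whole argument rests on one fact: a well-partitioned $Q \in \mathbb{R}^{N\times n}$ has full column rank $n$ by Lem.~\ref{lem:wellmap}, hence admits a left inverse, e.g. $Q_L^{-1} = (Q^\top Q)^{-1}Q^\top$ with $Q_L^{-1} Q = I_n$.

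The forward direction is immediate: if $PQ = I_n$, then $QPQ = Q(PQ) = Q I_n = Q$, so $P \in Q^g$. No use of well-partitionedness is needed here.

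For the converse, suppose $QPQ = Q$. Left-multiplying both sides by $Q_L^{-1}$ gives $Q_L^{-1}(QPQ) = Q_L^{-1}Q$. The left-hand side equals $(Q_L^{-1}Q)(PQ) = I_n (PQ) = PQ$, and the right-hand side equals $I_n$, so $PQ = I_n$. This is where full column rank of $Q$ is essential — without it $QPQ = Q$ does not force $PQ = I_n$.

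There is essentially no obstacle here; the only thing to be careful about is invoking Lem.~\ref{lem:wellmap} to justify the existence of the left inverse, and stating explicitly that the forward implication holds for arbitrary $Q$ while the converse genuinely uses $\operatorname{rank}(Q) = n$.
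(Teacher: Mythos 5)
Your proof is correct, and it differs from the paper's in the forward direction. The paper proves $PQ = I_n \Rightarrow P \in Q^g$ by a detour: it first deduces $\Pi^2 = \Pi$ and invokes Lem.~\ref{lem:piproj} to get $Q \in P^g$, then notes $\operatorname{rank}(P) = n$ and applies Lem.~\ref{lem::genrefinv} to conclude $P \in Q^g$. Your one-line verification $QPQ = Q(PQ) = Q$ bypasses both lemmas entirely, and it makes visible a fact the paper's route obscures, namely that this direction holds for an arbitrary matrix $Q$ with no well-partitioned hypothesis — a point you state explicitly. For the converse, you and the paper do essentially the same thing: left-multiply $QPQ = Q$ by a left inverse of $Q$, whose existence follows from $\operatorname{rank}(Q) = n$ (Lem.~\ref{lem:wellmap}); the paper names the Moore--Penrose inverse $Q^+$ as that left inverse, while you write it as $(Q^\top Q)^{-1}Q^\top$, which is the same matrix. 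Your version is the more economical and self-contained of the two, at the cost of not exhibiting the connection to the projection characterization that the paper's chain of lemmas emphasizes.
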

\begin{proof}
Let's show the two sides of the equivalence.
\paragraph{$\Longrightarrow$} If $PQ = I_n$, we have directly $\Pi^2 = \Pi$ and thus with Lem.~\ref{lem:piproj},  $Q \in P^g$. Then as $PQ = I_n$, we have $\operatorname{rank}(P) = n $ and using Lem.~\ref{lem::genrefinv} as  $Q \in P^g$ we have $P \in Q^g$
\paragraph{$\Longleftarrow$} if $P \in Q^g$ then $QPQ = Q$, by multiplying by the Moore Penrose inverse $Q^+$ to the left, we have $Q^+QPQ = Q^+Q$ thus $PQ = I_n$. The Moore Penrose inverse being a left inverse of $Q$ as seen in App.~\ref{app:proofmp}
\end{proof}

\section{Proofs} \label{app:proof}

\subsection{Consistency Lem.~\ref{lem:consistency}}
\label{app:proofconsist}
This property is an extension of Loukas work \cite{loukas2019graph} on the consistency of the combinatorial Laplacian. We will use the Lem.~\ref{lem:loukas} proven in \cite{loukas2019graph} for our proof. 

Let $\Qcombi$ be a well-partitioned lifting matrix. For all adjacency matrix $A$, consider $\Lnorm(A) = \diagmatrix(A)\Lcombi(A)\diagmatrix(A)$ and $\Qnorm(A,\Qcombi) = {\diagmatrix(A)}^{-1} \Qcombi \diagmatrix(A_c) = \diagmatrix^{-1} \Qcombi \diagmatrix_c$.

\paragraph{$\Longleftarrow $}
    When $\Qcombi$ is a binary matrix. By using Lem.~\ref{lem:loukas} $\Lcombi(A_c) = \Qcombi^\top \Lcombi(A) \Qcombi$. Thus, 
    
    \begin{align*}
        \Qnorm^\top  \Lnorm \Qnorm &= {\Qnorm(A,\Qcombi)}^\top  \Lnorm(A) \Qnorm(A,\Qcombi)\\
        &=  \diagmatrix_c \Qcombi^\top  \diagmatrix^{-1}   \diagmatrix \Lcombi(A) \diagmatrix^{-1} \Qcombi \diagmatrix_c \\
        &= \diagmatrix(A_c) \Qcombi^\top   \Lcombi(A)  \Qcombi \diagmatrix(A_c) \\
        &= \diagmatrix(A_c) \Lcombi(A_c) \diagmatrix(A_c) \\
        &= \Lnorm(A_c) \\
        &= \Lnorm_c 
    \end{align*}

\paragraph{$\Longrightarrow$ }
When $\Lnorm_c = \Qnorm^\top   \Lnorm \Qnorm$ then
\begin{align*}
    \Lnorm_c =  \Qnorm^\top   \Lnorm \Qnorm &\Longrightarrow \diagmatrix_c \Lcombi_c \diagmatrix_c =  \diagmatrix_c \Qcombi^\top  \diagmatrix^{-1}  \diagmatrix \Lcombi \diagmatrix \diagmatrix^{-1} \Qcombi \diagmatrix_c \\
    &\Longrightarrow  \diagmatrix_c \Lcombi_c \diagmatrix_c =   \diagmatrix_c \Qcombi^\top   \Lcombi \Qcombi  \diagmatrix_c \\
    &\Longrightarrow \Lcombi_c = \Qcombi^\top   \Lcombi \Qcombi \\
    &\Longrightarrow \Qcombi \, \text{is binary}
\end{align*}

The last line using Lem.~\ref{lem:loukas}.

\subsection{Proof of Lemma Moore Penrose inverse}
\label{app:proofmp}

\begin{proof}
Let's show that $Q^+ = (Q^\top Q)^{-1}Q^\top  $. 

For a given well-partitioned lifting matrix $Q \in \mathbb{R}^{N\times n}$ , we have $\operatorname{rank}(Q) = n$ as proposed in Lem.~\ref{lem:wellmap}.

We can compute one left inverse as $Q_L^{-1} = (Q^\top Q)^{-1}Q^\top $.

It verifies the four properties of Moore Penrose inverse : 
\begin{enumerate}
    \item $QQ_L^{-1}Q = Q(Q^\top Q)^{-1}Q^\top Q =Q  $
    \item $Q_L^{-1}QQ_L^{-1} = (Q^\top Q)^{-1}Q^\top  Q (Q^\top Q)^{-1}Q^\top  = (Q^\top Q)^{-1}Q^\top  = Q_L^{-1}$
    \item $(QQ_L^{-1})^\top  = ( (Q^\top Q)^{-1}Q^\top )^\top  Q^\top  = Q ((Q^\top Q)^{-1})^\top  Q^\top  = QQ_L^{-1}$ 
    \item $(Q_L^{-1}Q)^\top  = I_n = Q_L^{-1}Q$ 
\end{enumerate}
Thus it is the unique Moore Penrose inverse of the lifting matrix $Q$.

Please note that for the third condition we used that $(Q^\top Q)$ diagonal (see Lem.~\ref{lem::qqtdiag}) and so $(Q^\top Q)^{-1}$ is symmetric.

It is thus by definition a generalized reflexive inverse of $Q$ with the same support as $\Qnorm^\top$.
\end{proof}

\subsection{Proof $\Ploukas$ generalized reflexive inverse }
\label{app:ploukas}
For a multilevel coarsening scheme we have intermediary coarsening with intermediary well-partitioned $Q_i$. We remark that $Q = Q_1 \ldots Q_c$ is also a well-partitioned lifting matrix.

Let's examine $\Ploukas = Q_c^+ \ldots Q_1^+$: First it is of same support as $Q^\top $. Secondly, $\Ploukas Q =  Q_c^+ \ldots Q_1^+  Q_1 \ldots Q_c = I_n  $ as Moore Penrose inverse are left inverse. Using Lem.~\ref{lem::pqin}, we have $P \in Q^g$. Thus $\Ploukas$ is a generalized reflexive inverse of $Q$ and has the same support as $Q^\top $.

\subsection{Proof of Coarsen-lift operator projection Lem.~\ref{lem:piproj}}
\label{app:proofpiproj}

\begin{proof}
Let's show the two side of the equivalence :

\paragraph{ $\Longrightarrow$ $\Pi$ projection implies  $Q \in P^g$ }:

We have $\Pi^2 =  \Pi$ then $QPQP = QP$.

As $Q$ is well-partitioned, we have that $\operatorname{rank}(Q)=n$ (using Lem.~\ref{lem:wellmap}). Thus we have the existence of a left inverse ( one example is the Moore Penrose inverse) such that $Q_L^{-1}Q = I_n$. 

We multiply our expression at the left by $Q_L^{-1}$, then $Q_L^{-1}QPQP = Q_L^{-1}QP$ thus $PQP = P$. Consequently $Q$ is a generalized inverse of $P$.

\paragraph{ $\Longleftarrow$ $Q \in P^g$ implies  $\Pi = QP$ projection}:

$\Pi \Pi = QPQP = QP = \Pi$. Thus $\Pi$ is a projection using directly the generalized inverse property on $Q$. (condition (i) ).
\end{proof}
\subsection{Proof of Reflexive generalized inverse Lem.~\ref{lem::genrefinv}}
\label{app:prfreflexinv}


\begin{proof}

For a well-partitioned lifting matrix $Q$ and a reduction matrix $P$ such that $Q \in P^g$. Let's show the two sides of the equivalence:

\paragraph{ $\Longrightarrow$ $\operatorname{rank}(P) = n$ implies  $P \in Q^g$ }:

With this "full rank" reduction matrix, using Lem.~\ref{lem:wellmap} there is the existence of a right pseudo inverse $P_R^{-1}$ such that $PP_R^{-1}= I_n$.

As we have $Q \in P^g$ that implies $\Pi^2 = \Pi$, using the Lem.~\ref{lem:piproj}. We thus have $QPQP = QP$ using the existence of $P_R^{-1}$ we multiply this equality by $P_R^{-1}$.
\begin{align*}
     QPQP = QP &\Longrightarrow QPQPP_R^{-1} = QPP_R^{-1}\\
    &\Longrightarrow QPQ = Q
\end{align*}
Thus $P\in Q^g$.

\paragraph{ $\Longleftarrow$ $P \in Q^g$ implies $\operatorname{rank}(P) = n$   }:

$QPQ = Q$ by assumption. As we know that $\operatorname{rank}(Q)=n$, we can compute one left inverse $Q_L^{-1}$ and multiply at left for both side of this equation. Thus we have $PQ = I_n$.

This is only possible if $\operatorname{rank}(P) \geq n$ otherwise the kernel of $PQ$ would not be null. Thus bounded by the dimension of the matrix $\operatorname{rank}(P) = n$.

\paragraph{Second implication of the theorem}
Now for the second part of the theorem: we have $P \in Q^g$. 
Thus $PQPQ = PQ$ and $\Pi^2 = \Pi$. 

Using the Lem.~\ref{lem:piproj}, we have $Q\in P^g$. this justifies the inclusion of $E_2 \subset E_1$
\end{proof}

\subsection{Proof of Generalized inverse Characterization Lem.~\ref{lem:caracginv}}
\label{app:caracginv}


\begin{proof}

We will use the following theorem presented as theorem 2.1 in 
\cite{rao1972generalized}, but presented as a Corollary of 
\cite{penrose1955generalized} in 
\cite{ben2003generalized}.
\begin{theorem}[Generalized inverse characterization]
Let $A \in \mathbb{R}^{m\times n}$. Then $A^{g}$ exists. The entire classe of generalized inverses is generated from any given inverse $A^{g}$ by the formula
\begin{equation}
    \label{eq:generategeneralized}
    A^{g} + U - A^{g}AUAA^{g}
\end{equation}
where $U \in \mathbb{R}^{n\times m} $ is arbitrary.
\end{theorem}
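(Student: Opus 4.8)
The plan is to establish three things: that a generalized inverse $A^g$ exists at all, that every matrix produced by formula \eqref{eq:generategeneralized} is indeed a generalized inverse, and that conversely every generalized inverse arises from some choice of $U$. Throughout I fix one generalized inverse, which I also denote $A^g$, satisfying the defining identity $A A^g A = A$; note that the argument below uses \emph{only} this identity, so the characterization will hold no matter which fixed $A^g$ is chosen as the base point.

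For \textbf{existence}, I would invoke the singular value decomposition $A = W \Sigma V^\top$ and set $A^g = V \Sigma^+ W^\top$, where $\Sigma^+$ is obtained from $\Sigma$ by inverting the nonzero singular values and transposing. A one-line computation then gives $A A^g A = A$ (any full-rank factorization $A = BC$ would serve equally well, yielding the Moore--Penrose inverse already used in the paper). This shows $A^g \neq \emptyset$, which is the content of the assertion ``$A^g$ exists''.

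For the \textbf{forward inclusion}, let $B = A^g + U - A^g A U A A^g$ for an arbitrary $U \in \mathbb{R}^{n \times m}$. I would compute $ABA$ by distributing and applying $A A^g A = A$ wherever the pattern $A(\cdot)A$ appears: the first term contributes $A A^g A = A$, the middle term contributes $A U A$, and the third term contributes $-(A A^g A) U (A A^g A) = -A U A$. The last two cancel, leaving $ABA = A$, so $B$ is a generalized inverse. For the \textbf{reverse inclusion}, let $B$ be any generalized inverse, so $ABA = A$, and make the explicit choice $U = B - A^g$. Substituting into \eqref{eq:generategeneralized}, the $A^g$ and $-A^g$ contributions cancel and leave $B - A^g A B A A^g + A^g A A^g A A^g$; then $ABA = A$ rewrites $A^g A B A A^g = A^g A A^g$, and $A A^g A = A$ rewrites $A^g A A^g A A^g = A^g A A^g$, so these two terms cancel and the whole expression collapses to $B$.

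The main obstacle is not any hard estimate — every step is a finite algebraic identity — but careful bookkeeping. The key subtlety lies in the reverse inclusion: since the fixed $A^g$ need not be reflexive, one has $A^g A A^g \neq A^g$ in general, so the proof cannot simplify either $A^g A A^g$ term on its own and must instead rely on the two such terms cancelling against each other. Keeping the products grouped so that the identities $A A^g A = A$ and $ABA = A$ can be applied cleanly, and resisting premature simplification of $A^g A A^g$, is therefore the crux of getting the reverse direction right.
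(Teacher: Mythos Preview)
Your argument is correct in all three parts: existence via the SVD (or any full-rank factorization), the forward inclusion by direct expansion using $AA^gA=A$, and the reverse inclusion by the explicit choice $U=B-A^g$ together with the cancellation you carefully isolate. The observation that one must \emph{not} assume reflexivity of the fixed $A^g$, and hence cannot simplify $A^gAA^g$ on its own, is exactly the right subtlety to flag.

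The paper, however, does not prove this theorem at all: it simply quotes it as Theorem~2.1 of Rao and Mitra (and as a corollary of Penrose's original paper in Ben-Israel and Greville), and then \emph{applies} it to the special case $A=Q$ with the Moore--Penrose inverse as the base point to derive the parametrization in Lemma~\ref{lem:caracginv}. So your write-up supplies a genuine, self-contained proof where the paper relies on a citation; what you gain is independence from external references, what the paper gains is brevity. If your goal was to match the paper, you have over-delivered; if your goal was to understand why the result is true, you have done exactly the right thing.
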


We apply this characterization to the well-partitioned lifting matrix $\Qnorm \in \mathbb{R}^{N \times n}$ using a well known generalized inverse of $\Qnorm$, namely the "Moore-Penrose pseudo inverse" $Q^+$ that has been characterized in App.~\ref{app:proofmp}.

Thus $Q^+ + M - Q^+QMQQ^+$, for $M \in \mathbb{R}^{n\times N}$ arbitrary, generates all the generalized inverse of the lifting matrix $Q$.

But this formula has some simplifications. Indeed $Q^+Q = I_n$ as we have proved in App.~\ref{app:proofmp}.  Thus the characterization can be rewritten as $Q^+ +  M (I_n - QQ^+)$ for an arbitrary $M \in \mathbb{R}^{n\times N}$.

\paragraph{Reflexive} Let's show that these generalized inverse are also reflexive : 

We have $QPQ = Q$ as $P \in Q^g$. Thus $\Pi^2 = QPQP = QP = \Pi$. Using the equivalence of Lem.~\ref{lem:piproj} we have that $Q \in P^g$. This is a characterization of reflexive generalized inverse.

\end{proof}

\subsection{Proof of Generalized reflexive inverse of same support Lem.~\ref{lem:caracsupp}}
\label{app:caracsupp}

\begin{proof}
Let's show the two side of the equivalence :

\paragraph{ $\Longrightarrow$ $P \in Q^g$ with same support implies $\sum_{k = 1}^N \frac{P_{ik}}{\diagmatrix(k)} = \frac{1}{ \diagmatrix_c(i) }$   }:

For a well-partitioned, degree-wised valued matrix $Q$ and a matrix $P \in Q^g$ with same support as $Q^\top $, using Lem.~\ref{lem::pqin}, we have $PQ = I_N$.
Thus each diagonal term must be equal to one :

\begin{align*}
    \forall i, \, (PQ)_{ii} = 1  
        &\Longrightarrow \forall i, \, \sum_{k = 1}^N P_{ik}Q_{ki} = 1 \\
        &\Longrightarrow \forall i, \, \sum_{k = 1}^N P_{ik}\frac{\diagmatrix_c(i)}{\diagmatrix(k)} = 1 \\
        &\Longrightarrow \forall i, \, \sum_{k = 1}^N \frac{P_{ik}}{\diagmatrix(k)}  = \frac{1}{ \diagmatrix_c(i) } \\
\end{align*}

It is the normalization condition we have. 

\paragraph{ $\Longleftarrow$  $\sum_{k = 1}^N \frac{P_{ik}}{\diagmatrix(k)} = \frac{1}{ \diagmatrix_c(i) }$  and with same support as $Q^\top $ implies $P \in Q^g$}:

For a reduction matrix $P$ with the same support as $Q^\top $, $PQ$ is thus a diagonal matrix (Lem.~\ref{lem::qqtdiag}).

Moreover  $\sum_{k = 1}^N \frac{P_{ik}}{\diagmatrix(k)} = \frac{1}{ \diagmatrix_c(i) }$ we have
\begin{align*}
    (PQ)_{ij} &= \sum_{k=1}^N P_{ik} Q_{kj} \\
    &= \sum_{k | Q_{kj} \neq 0, Q_{ki} \neq 0} P_{ik} \frac{\diagmatrix_c(i)}{\diagmatrix(k)}\\
    &= \begin{cases}
     0 \text{ when } i \neq j\\
    1\text{ when} i = j 
    \end{cases}
\end{align*}

Thus $PQ = I_n$, using Lem.~\ref{lem::pqin} we have $P \in Q^g$.

\end{proof}

\section{Presentation of datasets}\label{app::datasets}

We restrict the well known Cora and Citeseer to their principal connected component(PCC) as it more compatible with coarsening as a preprocessing step. Indeed, the loukas algorithm tend to coarsen first the smallest connected components before going to the biggest which leads to poor results for coarsening with a small coarsening ratio. However working with this reduced graph make the comparison with other works more difficult as it is not the same training and evaluating dataset. The characteristics of these datasets and their principal connected component are reported in Tab.~\ref{tab:caracCora}. 

\begin{table}[ht]
\centering
\caption{Characteristics of Cora and CiteSeer Datasets}
\vspace{10pt}
\begin{tabular}{lccccc}
\toprule
\textbf{Dataset} & \textbf{\# Nodes} & \textbf{\# Edges}  & \textbf{\# Train Nodes} & \textbf{\# Val Nodes} & \textbf{\# Test Nodes} \\
\midrule
Cora & 2,708 & 10,556 & 140 & 500 & 1,000   \\
Cora PCC & 2,485 & 10,138  & 122 & 459 & 915 \\
\midrule
Citeseer & 3,327 & 9,104 & 120 & 500 & 1,000   \\
Citeseer PCC & 2,120 & 7,358   & 80 & 328 & 663 \\

\bottomrule
\label{tab:caracCora}
\end{tabular}
\end{table}

\section{Random Geometric graphs}
\label{app:random_geometric}

A random geometric graph is built by sampling nodes with coordinates in $[0,1]^2$ and connecting them if their distance is under a given threshold. For this additional experiment on minimizing the RSA, we sample 1000 nodes with a threshold of $0.05$ (Fig.~\ref{fig:syn-graph}).
\begin{figure}[h!]
    \centering
    \includegraphics{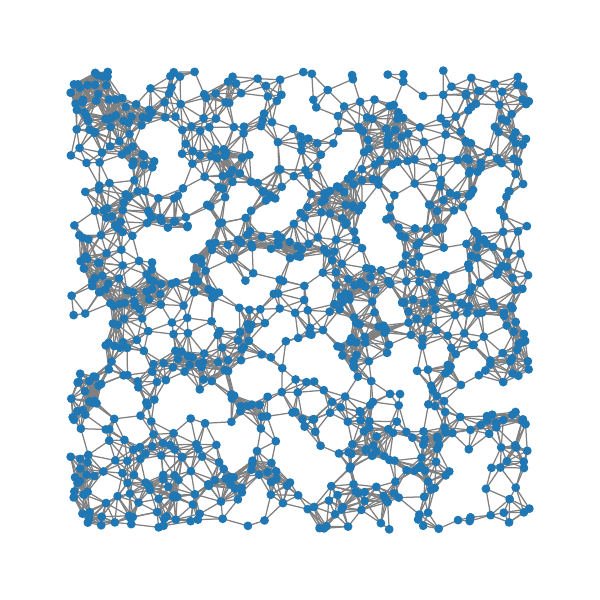}
    \caption{Example of a random Geometric graph }
    \label{fig:syn-graph}
\end{figure}

The results presented in Fig.~\ref{fig:random_geom_rsa} confirms the observation made for the same experiment on Cora, $\Prao$ being the best option to minimize the RSA.
\begin{figure}[H]
    \centering
    \begin{subfigure}[b]{0.45\textwidth}
        \includegraphics[width=\linewidth]{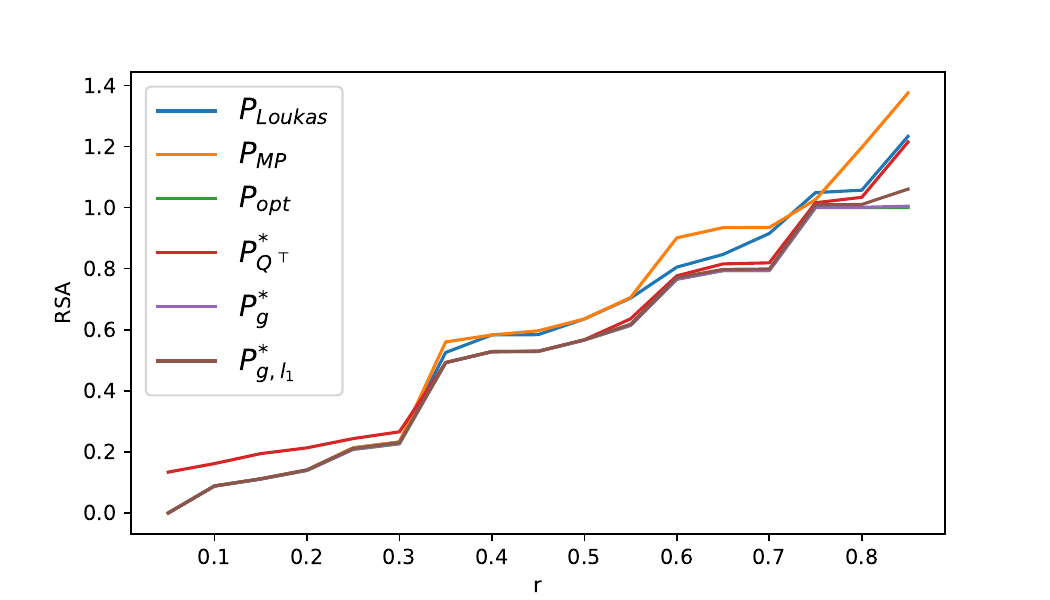}
        \caption{$\Lnorm$}
    \end{subfigure}
    \hfill
    \begin{subfigure}[b]{0.45\textwidth}
        \includegraphics[width=\linewidth]{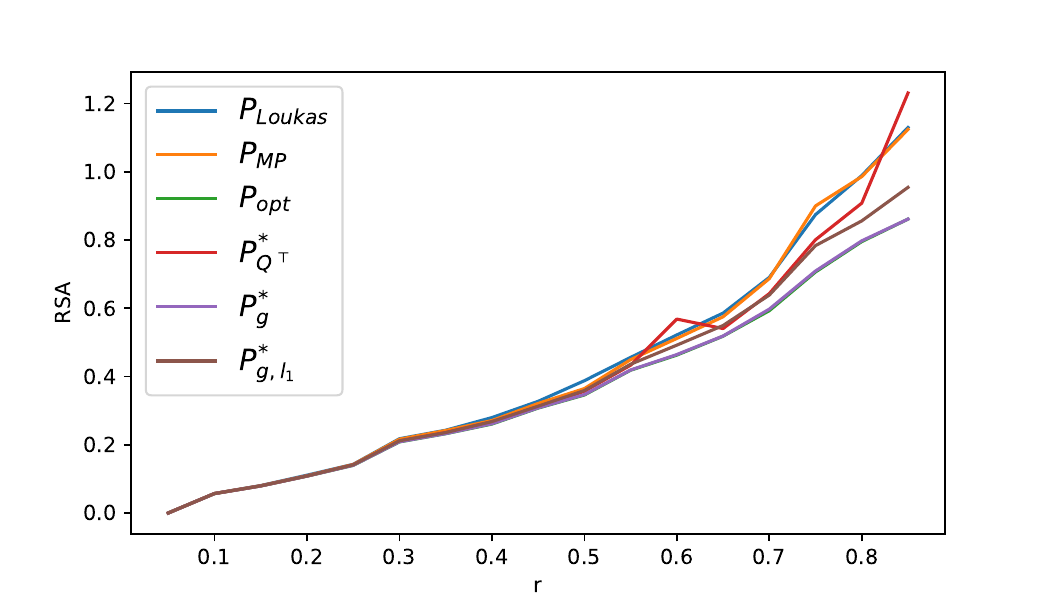}
        \caption{$\Lcombi$}
    \end{subfigure}
    \caption{RSA minimization on a Random Geometric Graph}
    \label{fig:random_geom_rsa}
\end{figure}

\section{Convergence of optimization algorithm}
\label{app:optimparam}
For the optimization hyperparameters, we compared different optimizer, different learning rate, different initialization. The results for the three different problems can be find below.
\subsection{Parameters for $P^*_{g}$}
We present here the chosen parameters for the optimization problem defined in Eq.~\ref{eq:optimRSAginv}. 

As initialization matrix $M$ for our optimization procedure, we choose the Moore Penrose inverse matrix $P_{MP}$  compared to $\Prao$ and $\Ploukas$. Indeed as shown in Fig.~\ref{fig:p_optim_g}, the $\Prao$ initialization is too close to the minima and thus the matrix doesn't change and is too similar to $\Prao$. $P_{MP}$ is a better initialization for combinatorial Laplacian as $\Ploukas$. Furthermore $P_{MP}$ is more generic as $\Ploukas$ which can only be computed in a multi-level coarsening algorithm. The random initialization does not converge fast enough and show the relevance to consider a more "classic" reduction matrix as initialization. 

We choose the stochastic gradient optimizer (SGD) as it is more stable than the Adam optimizer. For the Learning rate we choose $lr=0.01$ for an improved stability.

Fig.~\ref{fig:p_optim_g} is computed for a coarsening on Cora with $r = 0.5$.

\begin{figure}[H]
    \centering
    \begin{minipage}[b]{0.24\textwidth}
        
        \centering
        \includegraphics[width=\textwidth]{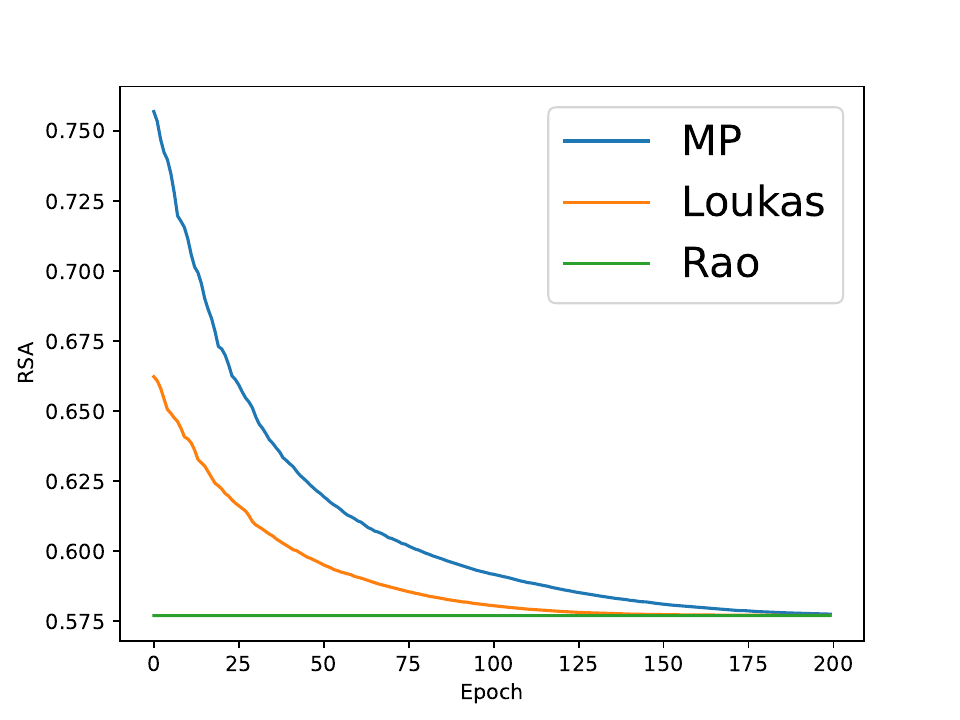}
        \caption*{(a) Initialization}
    \end{minipage}
    \hfill
    \begin{minipage}[b]{0.24\textwidth}
        
        \centering
        \includegraphics[width=\textwidth]{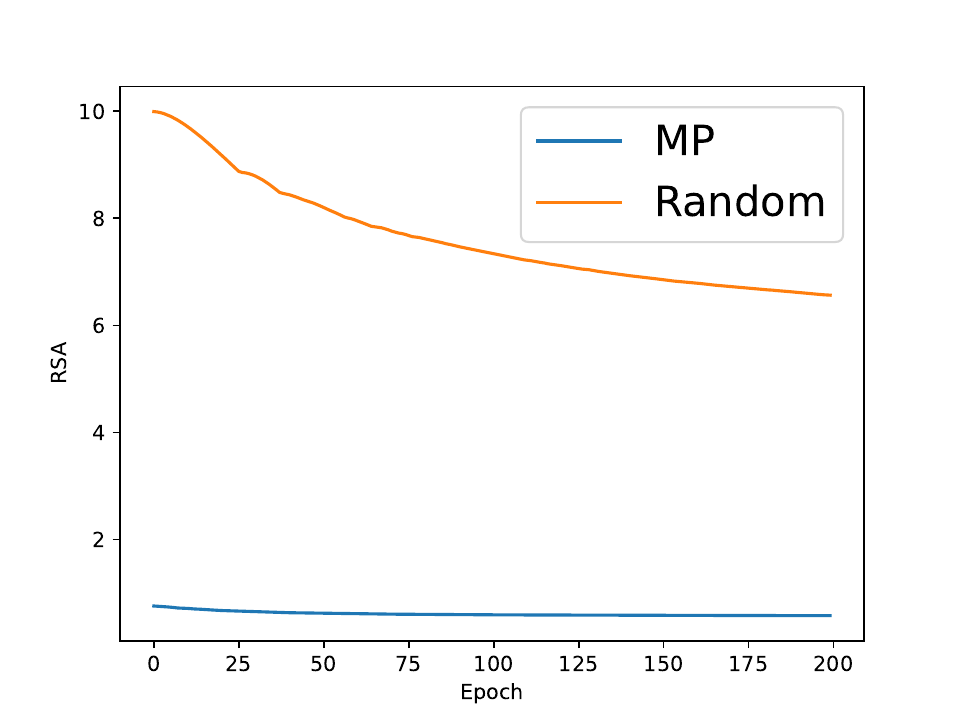}
        \caption*{(b) Random Init}
    \end{minipage}
    \hfill
    \begin{minipage}[b]{0.24\textwidth}
        \centering
        \includegraphics[width=\textwidth]{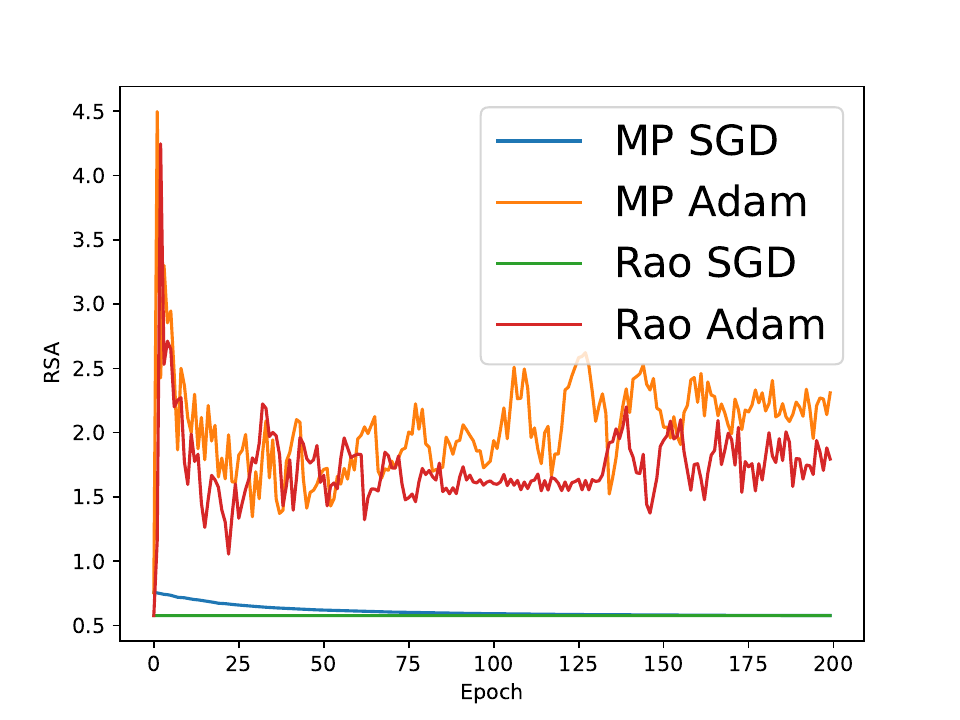}
        \caption*{(c) Optimizer}
    \end{minipage}
    \hfill
    \begin{minipage}[b]{0.24\textwidth}
        \centering
        \includegraphics[width=\textwidth]{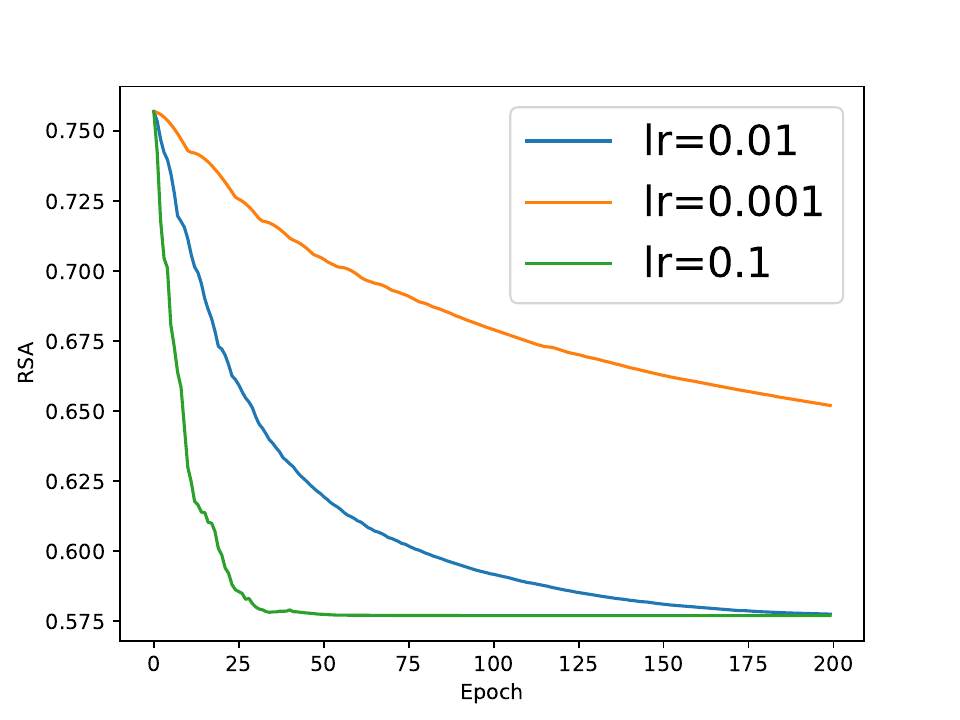}
        \caption*{(d) Learning rate}
    \end{minipage}
    \caption{Parameters for $P^*_g$}
    \label{fig:p_optim_g}
\end{figure}

\subsection{Parameters for $P^*_{g, l_1}$}

We present here the chosen parameters for the optimization problem defined in Eq.~\ref{eq:optimRSAginvsparse}.

To enforce the sparsity we apply after the optimization procedure a threshold of $0.001$ to erase the small coefficients.

We choose as initialization matrix $M$ the $\Prao$ matrix. $P_{MP}$ and $\Ploukas$ provide a better combined loss as shown in Fig.~\ref{fig:p_optim_g_sparse}, but due to the sparsity constraint they do not leave their support and the sparsity remains unchanged (as shown in Tab.~\ref{tab:sparse}). It then becomes an optimization on the same support and the RSA is less improved. Oppositely when we initialize with $P_rao$ which is very dense, we obtain a number of non zero coefficients close to the support of $P_{MP}$ with an additional $500$ coefficients. 

We choose an $\lambda$ coefficients which control the $l_1$ penalty equals to $\lambda = 0.01$ to enforce the sparsity. 

As the previous experiment, we choose the SGD optimizer and a learning rate $lr=0.01$ for an improved stability.

Fig.~\ref{fig:p_optim_g_sparse} and Tab.~\ref{tab:sparse} are computed for a coarsening on Cora with $r = 0.5$.

\begin{figure}[H]
    \centering
    \begin{minipage}[b]{0.24\textwidth}
        
        \centering
        \includegraphics[width=\textwidth]{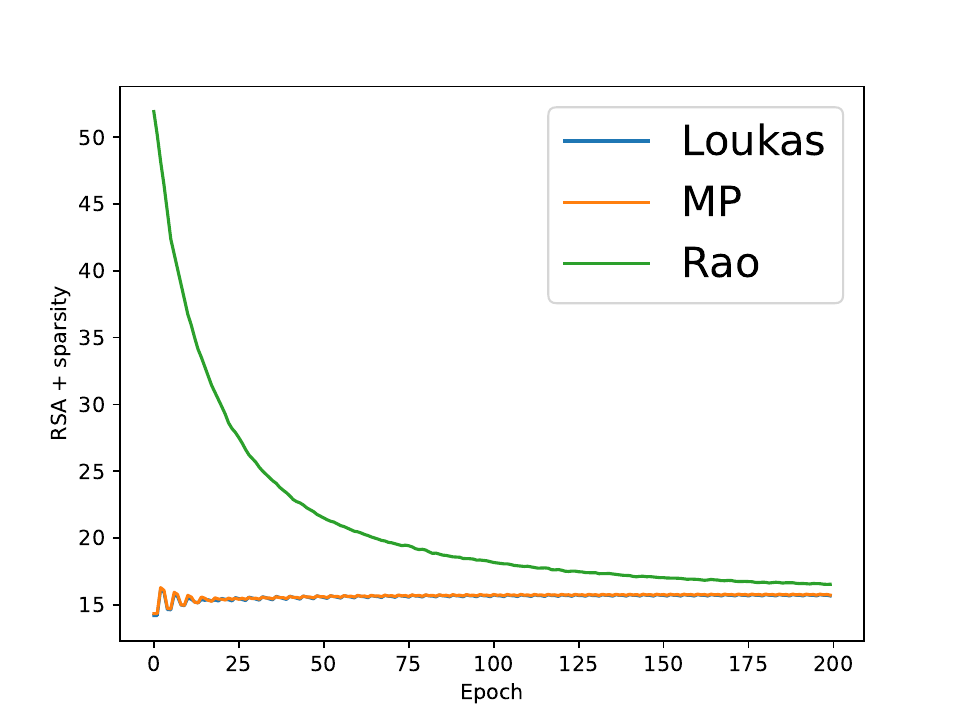}
        \caption*{(a) Initialization}
    \end{minipage}
    \hfill
    \begin{minipage}[b]{0.24\textwidth}
        
        \centering
        \includegraphics[width=\textwidth]{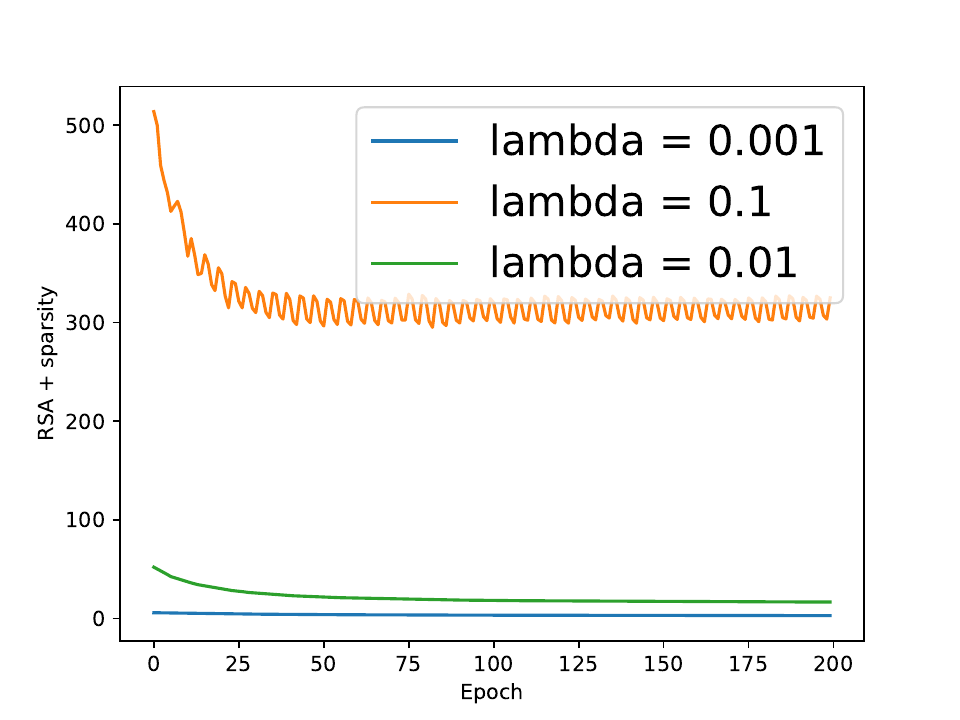}
        \caption*{(b) $\lambda$ Sparsity}
    \end{minipage}
    \hfill
    \begin{minipage}[b]{0.24\textwidth}
        \centering
        \includegraphics[width=\textwidth]{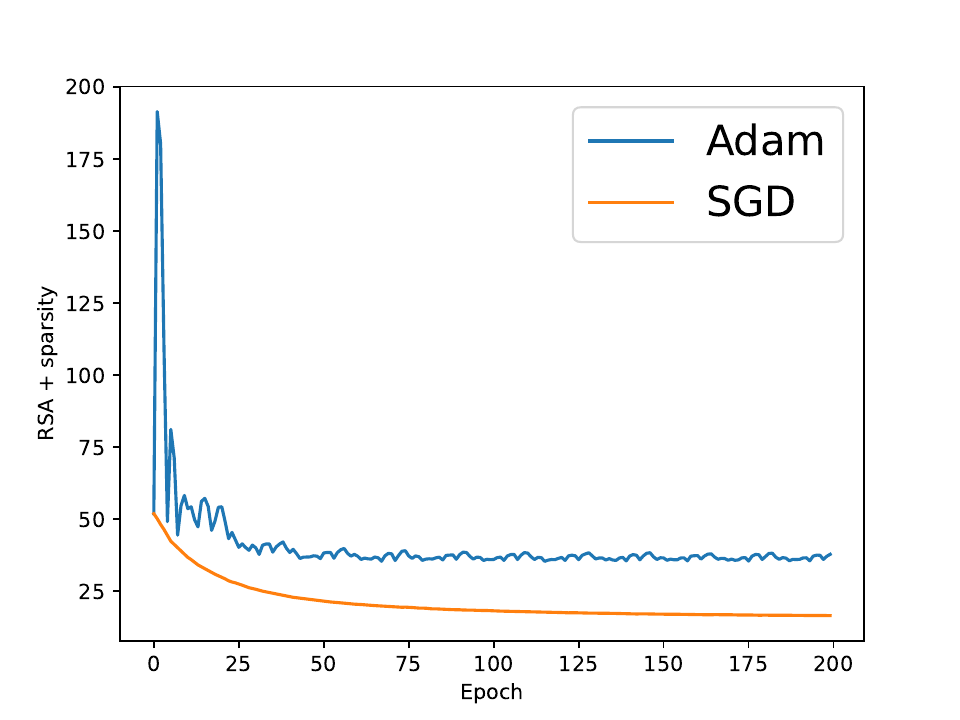}
        \caption*{(c) Optimizer}
    \end{minipage}
    \hfill
    \begin{minipage}[b]{0.24\textwidth}
        \centering
        \includegraphics[width=\textwidth]{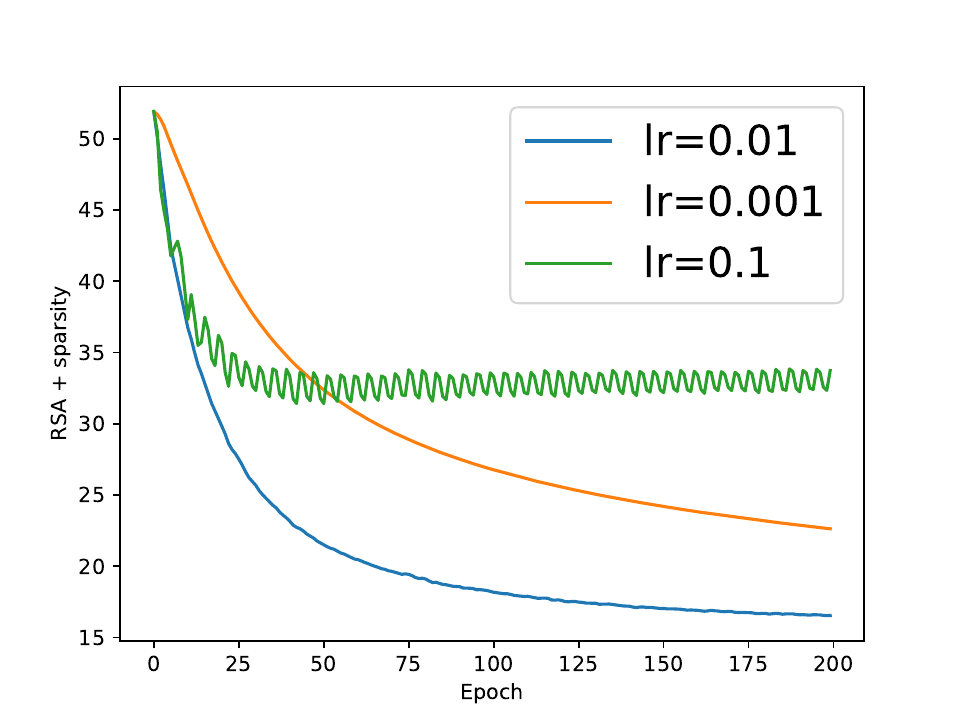}
        \caption*{(d) Learning rate}
    \end{minipage}
    \caption{Parameters for $P^*_g$}
    \label{fig:p_optim_g_sparse}
\end{figure}

\begin{table}[H]
\centering
\caption{Influence of Sparsity coefficient $\lambda$ for $P^*_{g,l_1}$ (ex with $r=0.5$ and threshold $= 0.001$) }
\vspace{10pt}
\begin{tabular}{cccc}
\toprule
 \textbf{Initialization} & \textbf{$\lambda$} &\textbf{\#Non zero coefficients of Initialization} &\textbf{\# Non zero Coefficients}  \\
\midrule
$\Ploukas$ & 0.01 & 2,485 &  2,479\\
$P_{MP}$ & 0.01 & 2,485  &  2,509 \\
$\Prao$ & 0.01 & 3,080,144 & 3,007   \\
$\Prao$ & 0.001 & 3,080,144 & 29,585   \\
$\Prao$ & 0.1 & 3,080,144 & 654,080   \\
\bottomrule
\end{tabular}
\label{tab:sparse}
\end{table}

\subsection{Parameters for $P^*_{Q^\top}$}
We present here the chosen parameters for the optimization problem defined in Eq.~\ref{eq:optimRSAsupp}. 

As initialization vector $\mu$ for our optimization procedure, we choose the non zero coefficients of the Moore Penrose inverse matrix $P_{MP}$ compared to the non zero coefficient of $\Ploukas$, a random initialization and the uniform vector which has all same values for each node in the same super node. Indeed as shown in Fig.~\ref{fig:p_optim_sup}, the $P_{MP}$ vector initialization is the only stable method.

The SGD optimizer is still more stable than the Adam optimizer which motivate its choice. For the Learning rate we choose $lr=0.05$ for a fast convergence and an improved stability. The results are reported in Fig.~\ref{fig:p_optim_sup}

\begin{figure}[H]
    \centering
    \begin{minipage}[b]{0.32\textwidth}
        
        \centering
        \includegraphics[width=\textwidth]{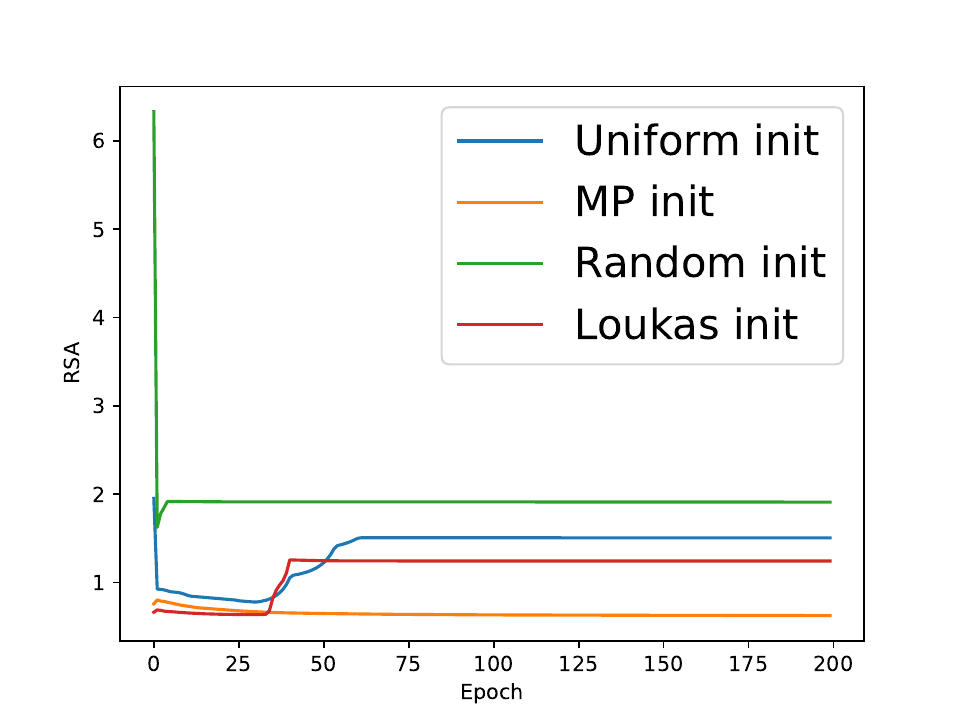}
        \caption*{(a) Initialization}
    \end{minipage}
    \hfill
    \begin{minipage}[b]{0.32\textwidth}
        \centering
        \includegraphics[width=\textwidth]{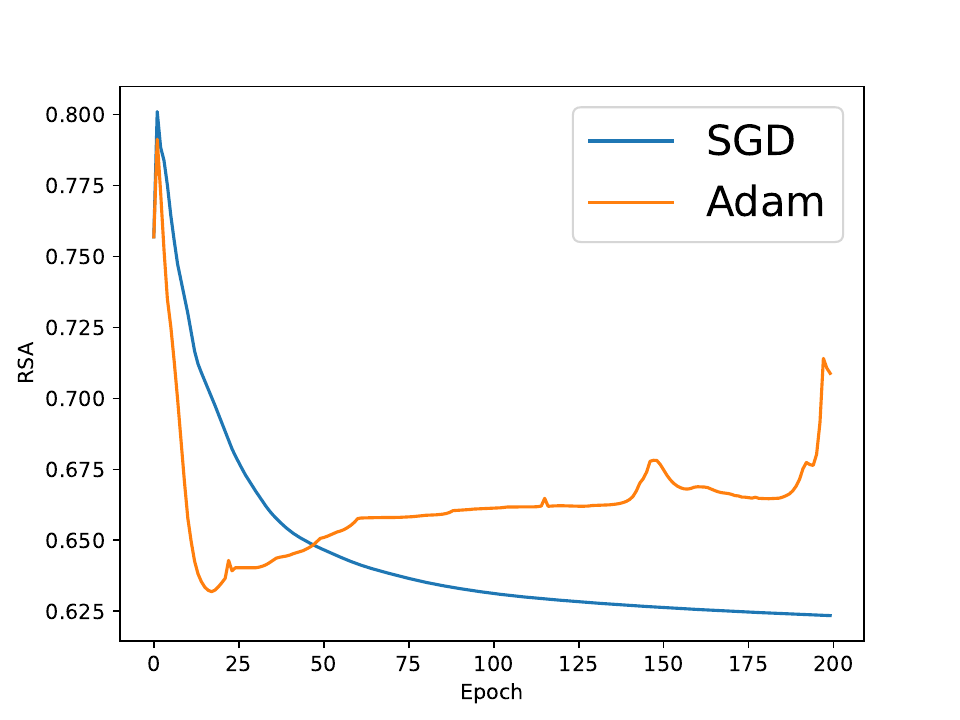}
        \caption*{(b) Optimizer}
    \end{minipage}
    \hfill
    \begin{minipage}[b]{0.32\textwidth}
        \centering
        \includegraphics[width=\textwidth]{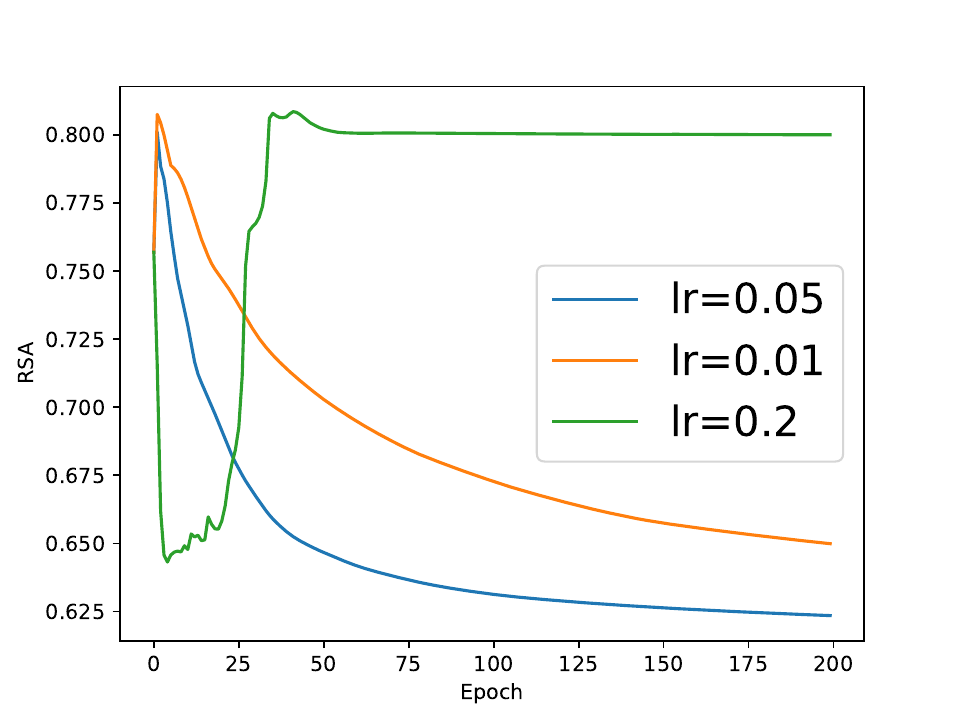}
        \caption*{(c) Learning rate}
    \end{minipage}
    \caption{Parameters for $P^*_{Q^\top}$}
    \label{fig:p_optim_sup}
\end{figure}
Fig.~\ref{fig:p_optim_sup} is computed for a coarsening on Cora with $r = 0.5$.

\section{Adaptation of Loukas Algorithm}\label{app:loukasalg}

You can find below the pseudo-code of Loukas algorithm Alg.~\ref{algo:loukas_adapted}. This algorithm works by greedy selection of \emph{contraction sets} (see below) according to some cost, merging the corresponding nodes, and iterate. The main modification is to replace the combinatorial Laplacian in the Loukas code by any Laplacian $L = \diagmatrix\Lcombi \diagmatrix$. Note that we also remove the diagonal of $A_c$ at each iteration, as Loukas given its lower value of RSA. The output of the algorithm is the resulting lifting matrix $\Qnorm = \Qnorm_1 \ldots \Qnorm_c$, the coarsened adjacency $A_c$ and the the Loukas and Moore Penrose reduction matrices $\Ploukas$ and $P_{MP}$.

\begin{algorithm}
\caption{Loukas Algorithm}
\label{algo:loukas_adapted}
\begin{algorithmic}[1]
    \REQUIRE Adjacency matrix $A$, Laplacian $L = \diagmatrix \Lcombi \diagmatrix $, a coarsening ratio $r$ , preserved space $\preservedspace$, percentage number of nodes merged at one coarsening step : $n_e$
    \STATE $n_{obj} \gets \operatorname{int}(N - N\times r)$ the number of nodes wanted at the end of the algorithm.
    \STATE compute cost matrix $B_0 \gets VV^\top  \Lnorm^{-1/2}$ with $V$ an orthonormal basis of $\preservedspace$
    \STATE $Q \gets I_N$
    \WHILE{ $n \geq n_{obj} $ }
        \STATE Make one coarsening STEP $ l$
        \STATE Create candidate contraction sets.
        \STATE For each contraction $\mathcal{C}$, compute $\operatorname{cost}(\mathcal{C}, B_{l-1}, \Lnorm_{l-1}) = \frac{\lVert \Pi_C B_{l-1}(B^\top _{l-1}\Lnorm{l-1}B_{l-1})^{-1/2}\rVert_{\Lcombi_{\mathcal{C}}}}{|\mathcal{C}| -1}$
        \STATE Sort the list of contraction set by the lowest score
        \STATE Select the lowest scores non overlapping contraction set while the number of nodes merged is inferior to min($n-n_{obj}, n_e $)
        \STATE  Compute the binary $0$-$1$ matrix $\Qcombi_l$ intermediary lifting matrix with contraction sets selected
        \STATE $Q_{l-1} \Qnorm_{l} \gets \diagmatrix_{l-1}^{-1} \Qcombi_l \diagmatrix_l $
        \STATE $\Ploukas \gets \Qnorm_l^+ \Ploukas $
        \STATE $P_{MP} \gets \operatorname{reuniform}(\Qnorm_l^\top ) P_{MP}$
        \STATE $B_l \gets \Qcombi^+ B_{l-1}$
        \STATE $A_l \gets \Qcombi^\top A_{l-1} \Qcombi -  \text{diag}\left((\Qcombi^\top A_{l-1}  \Qcombi)1_n\right) $
        \STATE $\Lnorm_l \gets \Qnorm^\top \Lnorm_{l-1} \Qnorm $
        \STATE $n \gets \operatorname{min}(n-n_{obj}, n_e )$
    \ENDWHILE
    \STATE $P_{MP} \gets (Q^\top Q)^{-1}Q^\top$
    \RETURN $A_c,\, \Qnorm,\, \Ploukas,\, P_{MP}$
\end{algorithmic}
\end{algorithm}

The terms $\Pi_\mathcal{C}$ and $L_\mathcal{C}$ denote specific projection of the contraction set. Their precise definitions are provided in Loukas work \cite{loukas2019graph}. We kept them unchanged in our experiments and defer any potential adjustments to future work.

In our adaptation we also add a parameter $n_e$ to limit the number of nodes contracted at each coarsening step. In one coarsening step, when a contraction set $\mathcal{C}$ is selected, we merge $|\mathcal{C}|$ nodes. In practice Loukas proposed in its implementation to force $n_e= \infty$ and coarsen the graph in one single iteration. We observed empirically that decreasing $n_e$ leads to improved results.

\paragraph{Candidate contraction Set.} Candidate contractions sets can take two main forms:  either pairs of nodes connected by an edge (referred to as the variation edges version), or the full neighborhood of each nodes (the variation neighborhood version). In practice, since neighborhoods tend to be large in our graph, this second option proves impractical for small coarsening ratios and typically leads to suboptimal results. We therefore rely on edge-based candidate sets, and adjust the parameter $n_e$ to control the greedy behaviour of the algorithm.

\section{Experiments hyperparameters}
For all the experiments, we preserve $K$ eigenvectors of the normalized self looped Laplacian $\Lnorm = \diagmatrix\Lcombi\diagmatrix$  with $\diagmatrix = (\operatorname{diag}(A1_N) +1)^{-1/2}$ with $K = 100$. We apply our adapted version of Loukas variation edges coarsening algorithm with $n_e= 10\%N$.

For the optimization hyperparameters as discussed previously in the appendix, we choose :
\begin{itemize}
    \item $P^*_{Q^\top}$ is initialized with the non zero coefficients of $P_{MP}$. We use a SGD optimizer with a momentum of 0.9, a learning rate $lr=0.05$ and $200$ epochs.
    \item $P^*_{g}$ is initialized with the non zero coefficients of $P_{MP}$. We use a SGD optimizer with a momentum of 0.9, a learning rate $lr=0.01$ and $200$ epochs.
    \item $P^*_{g,l_1}$ is initialized with the non zero coefficients of $\Prao$ and we use a $l_1$ penalty coefficient $\lambda = 0.01$ and a threshold of $0.001$ to enforce the sparsity. We use a SGD optimizer with a momentum of 0.9, a learning rate $lr=0.01$ and $200$ epochs.
\end{itemize}

\subsection{Hyperparameters for Tab.~\ref{tab:trainingresultsGCN}}
\label{app:gcnhyperparameters}

For the GCN, for both Cora and CiteSeeer we have 3 convolutional layers with the hidden dimensions $[256,128]$. We use an Adam Optimizer with a learning rate $lr = 0.01 $ and a weight decay $wd = 0.001$.

For the SGC model on Cora and Citeseer we make 2 propagations as preprocessing for the features.  We use an Adam Optimizer with a learning rate $lr = 0.1 $ and a weight decay $wd = 0.001$.

\subsection{Sparsity of the reduction matrices for Tab.~\ref{tab:trainingresultsGCN} }
\label{app:nnz}

\begin{table}[H]
\centering
\caption{Number of non zero elements for the different reduction matrices of Tab.~\ref{tab:trainingresultsGCN}  }
\vspace{10pt}
\small
    \begin{tabular}{ccccccc}
        \toprule
        \multirow{2}{*}{\#Non zero coefficients} & \multicolumn{3}{c}{Cora} & \multicolumn{3}{c}{ Citeseer } \\
        \cmidrule(lr){2-4} \cmidrule(lr){5-7} 
        $r$ & 
        $0.3$ & $0.5$ & $0.7$ & 
        $0.3$ & $0.5$ & $0.7$  \\
        \midrule
        $\Ploukas$ & 
        2,485  & 2,485 & 2,485 & 
        2,120 & 2,120 & 2,120  \\
        $P_{MP}$ & 
        2,485  & 2,485 & 2,485 & 
        2,120 & 2,120 & 2,120  \\
        
        $\Prao$  &  
        4,315,200 & 3,080,144 & 1,846,584 &  
        3,136,320 & 2,235,527 & 1,341,714  \\
        
        $P_{Q^\top}^{*}$ &  
        2,485 & 2,485 & 2,485 & 
        2,120 & 2,120 & 2,120  \\
         $P_{g}^{*}$&  
         2,261,446 & 2,202,595 & 1,596,687 & 
         1,505,412 & 1,533,820 & 1,131,521  \\
         
         $P_{g,l_1}^{*}$&  
         2,822 & 3,003 & 2,881 & 
         2,629 & 3,505 & 3,278 
         \\
         $\Prao$ (coeff > 0.001) &  
        583,431 & 766,439 & 580,971 &  
        537,353 & 594,807 & 421,027  \\
        
        $P_{g}^{*}$ (coeff > 0.001) &  
         5,735 & 46,080 & 99,288 & 
         18,601 & 24,874 & 79,609  \\
\bottomrule
\end{tabular}
\label{tab:nnz_gnn}
\end{table}


\newpage
\section*{NeurIPS Paper Checklist}

\begin{enumerate}

\item {\bf Claims}
    \item[] Question: Do the main claims made in the abstract and introduction accurately reflect the paper's contributions and scope?
    \item[] Answer: \answerYes{}
    \item[] Justification: This paper is mainly theoretical. The abstract and introduction present the theorems and their consequences.
    \item[] Guidelines:
    \begin{itemize}
        \item The answer NA means that the abstract and introduction do not include the claims made in the paper.
        \item The abstract and/or introduction should clearly state the claims made, including the contributions made in the paper and important assumptions and limitations. A No or NA answer to this question will not be perceived well by the reviewers. 
        \item The claims made should match theoretical and experimental results, and reflect how much the results can be expected to generalize to other settings. 
        \item It is fine to include aspirational goals as motivation as long as it is clear that these goals are not attained by the paper. 
    \end{itemize}

\item {\bf Limitations}
    \item[] Question: Does the paper discuss the limitations of the work performed by the authors?
    \item[] Answer: \answerYes{} 
    \item[] Justification: The limitations of the experiments, in particular the scalability of the method, are discussed in the main body of the paper.
    \item[] Guidelines:
    \begin{itemize}
        \item The answer NA means that the paper has no limitation while the answer No means that the paper has limitations, but those are not discussed in the paper. 
        \item The authors are encouraged to create a separate "Limitations" section in their paper.
        \item The paper should point out any strong assumptions and how robust the results are to violations of these assumptions (e.g., independence assumptions, noiseless settings, model well-specification, asymptotic approximations only holding locally). The authors should reflect on how these assumptions might be violated in practice and what the implications would be.
        \item The authors should reflect on the scope of the claims made, e.g., if the approach was only tested on a few datasets or with a few runs. In general, empirical results often depend on implicit assumptions, which should be articulated.
        \item The authors should reflect on the factors that influence the performance of the approach. For example, a facial recognition algorithm may perform poorly when image resolution is low or images are taken in low lighting. Or a speech-to-text system might not be used reliably to provide closed captions for online lectures because it fails to handle technical jargon.
        \item The authors should discuss the computational efficiency of the proposed algorithms and how they scale with dataset size.
        \item If applicable, the authors should discuss possible limitations of their approach to address problems of privacy and fairness.
        \item While the authors might fear that complete honesty about limitations might be used by reviewers as grounds for rejection, a worse outcome might be that reviewers discover limitations that aren't acknowledged in the paper. The authors should use their best judgment and recognize that individual actions in favor of transparency play an important role in developing norms that preserve the integrity of the community. Reviewers will be specifically instructed to not penalize honesty concerning limitations.
    \end{itemize}

\item {\bf Theory assumptions and proofs}
    \item[] Question: For each theoretical result, does the paper provide the full set of assumptions and a complete (and correct) proof?
    \item[] Answer: \answerYes{} 
    \item[] Justification: The proofs are provided in the Appendix.
    \item[] Guidelines:
    \begin{itemize}
        \item The answer NA means that the paper does not include theoretical results. 
        \item All the theorems, formulas, and proofs in the paper should be numbered and cross-referenced.
        \item All assumptions should be clearly stated or referenced in the statement of any theorems.
        \item The proofs can either appear in the main paper or the supplemental material, but if they appear in the supplemental material, the authors are encouraged to provide a short proof sketch to provide intuition. 
        \item Inversely, any informal proof provided in the core of the paper should be complemented by formal proofs provided in appendix or supplemental material.
        \item Theorems and Lemmas that the proof relies upon should be properly referenced. 
    \end{itemize}

    \item {\bf Experimental result reproducibility}
    \item[] Question: Does the paper fully disclose all the information needed to reproduce the main experimental results of the paper to the extent that it affects the main claims and/or conclusions of the paper (regardless of whether the code and data are provided or not)?
    \item[] Answer: \answerYes{} 
    \item[] Justification: The code is included as supplementary material, it can be executed on any computer. 
    \item[] Guidelines:
    \begin{itemize}
        \item The answer NA means that the paper does not include experiments.
        \item If the paper includes experiments, a No answer to this question will not be perceived well by the reviewers: Making the paper reproducible is important, regardless of whether the code and data are provided or not.
        \item If the contribution is a dataset and/or model, the authors should describe the steps taken to make their results reproducible or verifiable. 
        \item Depending on the contribution, reproducibility can be accomplished in various ways. For example, if the contribution is a novel architecture, describing the architecture fully might suffice, or if the contribution is a specific model and empirical evaluation, it may be necessary to either make it possible for others to replicate the model with the same dataset, or provide access to the model. In general. releasing code and data is often one good way to accomplish this, but reproducibility can also be provided via detailed instructions for how to replicate the results, access to a hosted model (e.g., in the case of a large language model), releasing of a model checkpoint, or other means that are appropriate to the research performed.
        \item While NeurIPS does not require releasing code, the conference does require all submissions to provide some reasonable avenue for reproducibility, which may depend on the nature of the contribution. For example
        \begin{enumerate}
            \item If the contribution is primarily a new algorithm, the paper should make it clear how to reproduce that algorithm.
            \item If the contribution is primarily a new model architecture, the paper should describe the architecture clearly and fully.
            \item If the contribution is a new model (e.g., a large language model), then there should either be a way to access this model for reproducing the results or a way to reproduce the model (e.g., with an open-source dataset or instructions for how to construct the dataset).
            \item We recognize that reproducibility may be tricky in some cases, in which case authors are welcome to describe the particular way they provide for reproducibility. In the case of closed-source models, it may be that access to the model is limited in some way (e.g., to registered users), but it should be possible for other researchers to have some path to reproducing or verifying the results.
        \end{enumerate}
    \end{itemize}

\item {\bf Open access to data and code}
    \item[] Question: Does the paper provide open access to the data and code, with sufficient instructions to faithfully reproduce the main experimental results, as described in supplemental material?
    \item[] Answer: \answerYes{}
    \item[] Justification: The datasets are available through Torch-Geometric, and the code is included as supplementary material. It only uses open-source Python libraries.
    \item[] Guidelines:
    \begin{itemize}
        \item The answer NA means that paper does not include experiments requiring code.
        \item Please see the NeurIPS code and data submission guidelines (\url{https://nips.cc/public/guides/CodeSubmissionPolicy}) for more details.
        \item While we encourage the release of code and data, we understand that this might not be possible, so “No” is an acceptable answer. Papers cannot be rejected simply for not including code, unless this is central to the contribution (e.g., for a new open-source benchmark).
        \item The instructions should contain the exact command and environment needed to run to reproduce the results. See the NeurIPS code and data submission guidelines (\url{https://nips.cc/public/guides/CodeSubmissionPolicy}) for more details.
        \item The authors should provide instructions on data access and preparation, including how to access the raw data, preprocessed data, intermediate data, and generated data, etc.
        \item The authors should provide scripts to reproduce all experimental results for the new proposed method and baselines. If only a subset of experiments are reproducible, they should state which ones are omitted from the script and why.
        \item At submission time, to preserve anonymity, the authors should release anonymized versions (if applicable).
        \item Providing as much information as possible in supplemental material (appended to the paper) is recommended, but including URLs to data and code is permitted.
    \end{itemize}

\item {\bf Experimental setting/details}
    \item[] Question: Does the paper specify all the training and test details (e.g., data splits, hyperparameters, how they were chosen, type of optimizer, etc.) necessary to understand the results?
    \item[] Answer: \answerYes{} 
    \item[] Justification: All these details are mentioned in the setup paragraph of the experiments (Sec.~\ref{sec:experiments}) and in the Appendix. 
    \item[] Guidelines:
    \begin{itemize}
        \item The answer NA means that the paper does not include experiments.
        \item The experimental setting should be presented in the core of the paper to a level of detail that is necessary to appreciate the results and make sense of them.
        \item The full details can be provided either with the code, in appendix, or as supplemental material.
    \end{itemize}

\item {\bf Experiment statistical significance}
    \item[] Question: Does the paper report error bars suitably and correctly defined or other appropriate information about the statistical significance of the experiments?
    \item[] Answer: \answerYes{} 
    \item[] Justification: The results in Tab.~\ref{tab:trainingresultsGCN} are averaged over ten runs and reported with standard deviations.
    \item[] Guidelines:
    \begin{itemize}
        \item The answer NA means that the paper does not include experiments.
        \item The authors should answer "Yes" if the results are accompanied by error bars, confidence intervals, or statistical significance tests, at least for the experiments that support the main claims of the paper.
        \item The factors of variability that the error bars are capturing should be clearly stated (for example, train/test split, initialization, random drawing of some parameter, or overall run with given experimental conditions).
        \item The method for calculating the error bars should be explained (closed form formula, call to a library function, bootstrap, etc.)
        \item The assumptions made should be given (e.g., Normally distributed errors).
        \item It should be clear whether the error bar is the standard deviation or the standard error of the mean.
        \item It is OK to report 1-sigma error bars, but one should state it. The authors should preferably report a 2-sigma error bar than state that they have a 96\% CI, if the hypothesis of Normality of errors is not verified.
        \item For asymmetric distributions, the authors should be careful not to show in tables or figures symmetric error bars that would yield results that are out of range (e.g. negative error rates).
        \item If error bars are reported in tables or plots, The authors should explain in the text how they were calculated and reference the corresponding figures or tables in the text.
    \end{itemize}

\item {\bf Experiments compute resources}
    \item[] Question: For each experiment, does the paper provide sufficient information on the computer resources (type of compute workers, memory, time of execution) needed to reproduce the experiments?
    \item[] Answer: \answerYes{} 
    \item[] Justification: This is small-scale code, reproducible without specific computational resources.
    \item[] Guidelines:
    \begin{itemize}
        \item The answer NA means that the paper does not include experiments.
        \item The paper should indicate the type of compute workers CPU or GPU, internal cluster, or cloud provider, including relevant memory and storage.
        \item The paper should provide the amount of compute required for each of the individual experimental runs as well as estimate the total compute. 
        \item The paper should disclose whether the full research project required more compute than the experiments reported in the paper (e.g., preliminary or failed experiments that didn't make it into the paper). 
    \end{itemize}
    
\item {\bf Code of ethics}
    \item[] Question: Does the research conducted in the paper conform, in every respect, with the NeurIPS Code of Ethics \url{https://neurips.cc/public/EthicsGuidelines}?
    \item[] Answer: \answerYes{} 
    \item[] Justification: We confirm that our paper complies with the NeurIPS Code of Ethics. This is mainly a theoretical paper, and the code uses only open-source Python libraries.
    \item[] Guidelines:
    \begin{itemize}
        \item The answer NA means that the authors have not reviewed the NeurIPS Code of Ethics.
        \item If the authors answer No, they should explain the special circumstances that require a deviation from the Code of Ethics.
        \item The authors should make sure to preserve anonymity (e.g., if there is a special consideration due to laws or regulations in their jurisdiction).
    \end{itemize}

\item {\bf Broader impacts}
    \item[] Question: Does the paper discuss both potential positive societal impacts and negative societal impacts of the work performed?
    \item[] Answer: \answerNA{} 
    \item[] Justification: Given the theoretical nature of our work, we do not expect any direct societal impact. Future work on the scalability of graph coarsening algorithm may raise these questions, but this is beyond the scope of this paper.
    \item[] Guidelines:
    \begin{itemize}
        \item The answer NA means that there is no societal impact of the work performed.
        \item If the authors answer NA or No, they should explain why their work has no societal impact or why the paper does not address societal impact.
        \item Examples of negative societal impacts include potential malicious or unintended uses (e.g., disinformation, generating fake profiles, surveillance), fairness considerations (e.g., deployment of technologies that could make decisions that unfairly impact specific groups), privacy considerations, and security considerations.
        \item The conference expects that many papers will be foundational research and not tied to particular applications, let alone deployments. However, if there is a direct path to any negative applications, the authors should point it out. For example, it is legitimate to point out that an improvement in the quality of generative models could be used to generate deepfakes for disinformation. On the other hand, it is not needed to point out that a generic algorithm for optimizing neural networks could enable people to train models that generate Deepfakes faster.
        \item The authors should consider possible harms that could arise when the technology is being used as intended and functioning correctly, harms that could arise when the technology is being used as intended but gives incorrect results, and harms following from (intentional or unintentional) misuse of the technology.
        \item If there are negative societal impacts, the authors could also discuss possible mitigation strategies (e.g., gated release of models, providing defenses in addition to attacks, mechanisms for monitoring misuse, mechanisms to monitor how a system learns from feedback over time, improving the efficiency and accessibility of ML).
    \end{itemize}
    
\item {\bf Safeguards}
    \item[] Question: Does the paper describe safeguards that have been put in place for responsible release of data or models that have a high risk for misuse (e.g., pretrained language models, image generators, or scraped datasets)?
    \item[] Answer: \answerNA{} 
    \item[] Justification: This paper does not present such models.
    \item[] Guidelines:
    \begin{itemize}
        \item The answer NA means that the paper poses no such risks.
        \item Released models that have a high risk for misuse or dual-use should be released with necessary safeguards to allow for controlled use of the model, for example by requiring that users adhere to usage guidelines or restrictions to access the model or implementing safety filters. 
        \item Datasets that have been scraped from the Internet could pose safety risks. The authors should describe how they avoided releasing unsafe images.
        \item We recognize that providing effective safeguards is challenging, and many papers do not require this, but we encourage authors to take this into account and make a best faith effort.
    \end{itemize}

\item {\bf Licenses for existing assets}
    \item[] Question: Are the creators or original owners of assets (e.g., code, data, models), used in the paper, properly credited and are the license and terms of use explicitly mentioned and properly respected?
    \item[] Answer: \answerYes{} 
    \item[] Justification: The original papers for Cora and Citeseer are cited. Further details on these datasets are provided in the Appendix.
    \item[] Guidelines:
    \begin{itemize}
        \item The answer NA means that the paper does not use existing assets.
        \item The authors should cite the original paper that produced the code package or dataset.
        \item The authors should state which version of the asset is used and, if possible, include a URL.
        \item The name of the license (e.g., CC-BY 4.0) should be included for each asset.
        \item For scraped data from a particular source (e.g., website), the copyright and terms of service of that source should be provided.
        \item If assets are released, the license, copyright information, and terms of use in the package should be provided. For popular datasets, \url{paperswithcode.com/datasets} has curated licenses for some datasets. Their licensing guide can help determine the license of a dataset.
        \item For existing datasets that are re-packaged, both the original license and the license of the derived asset (if it has changed) should be provided.
        \item If this information is not available online, the authors are encouraged to reach out to the asset's creators.
    \end{itemize}

\item {\bf New assets}
    \item[] Question: Are new assets introduced in the paper well documented and is the documentation provided alongside the assets?
    \item[] Answer: \answerNA{} 
    \item[] Justification: This paper does not release new assets.
    \item[] Guidelines:
    \begin{itemize}
        \item The answer NA means that the paper does not release new assets.
        \item Researchers should communicate the details of the dataset/code/model as part of their submissions via structured templates. This includes details about training, license, limitations, etc. 
        \item The paper should discuss whether and how consent was obtained from people whose asset is used.
        \item At submission time, remember to anonymize your assets (if applicable). You can either create an anonymized URL or include an anonymized zip file.
    \end{itemize}

\item {\bf Crowdsourcing and research with human subjects}
    \item[] Question: For crowdsourcing experiments and research with human subjects, does the paper include the full text of instructions given to participants and screenshots, if applicable, as well as details about compensation (if any)? 
    \item[] Answer: \answerNA{} 
    \item[] Justification: This paper does not involve crowdsourcing nor research with human subjects.
    \item[] Guidelines:
    \begin{itemize}
        \item The answer NA means that the paper does not involve crowdsourcing nor research with human subjects.
        \item Including this information in the supplemental material is fine, but if the main contribution of the paper involves human subjects, then as much detail as possible should be included in the main paper. 
        \item According to the NeurIPS Code of Ethics, workers involved in data collection, curation, or other labor should be paid at least the minimum wage in the country of the data collector. 
    \end{itemize}

\item {\bf Institutional review board (IRB) approvals or equivalent for research with human subjects}
    \item[] Question: Does the paper describe potential risks incurred by study participants, whether such risks were disclosed to the subjects, and whether Institutional Review Board (IRB) approvals (or an equivalent approval/review based on the requirements of your country or institution) were obtained?
    \item[] Answer: \answerNA{} 
    \item[] Justification: This paper does not involve crowdsourcing nor research with human subjects.
    \item[] Guidelines:
    \begin{itemize}
        \item The answer NA means that the paper does not involve crowdsourcing nor research with human subjects.
        \item Depending on the country in which research is conducted, IRB approval (or equivalent) may be required for any human subjects research. If you obtained IRB approval, you should clearly state this in the paper. 
        \item We recognize that the procedures for this may vary significantly between institutions and locations, and we expect authors to adhere to the NeurIPS Code of Ethics and the guidelines for their institution. 
        \item For initial submissions, do not include any information that would break anonymity (if applicable), such as the institution conducting the review.
    \end{itemize}

\item {\bf Declaration of LLM usage}
    \item[] Question: Does the paper describe the usage of LLMs if it is an important, original, or non-standard component of the core methods in this research? Note that if the LLM is used only for writing, editing, or formatting purposes and does not impact the core methodology, scientific rigorousness, or originality of the research, declaration is not required.
    \item[] Answer: \answerNA{} 
    \item[] Justification: The core method development in this research does not involve LLMs as any important, original, or non-standard components.
    \item[] Guidelines:
    \begin{itemize}
        \item The answer NA means that the core method development in this research does not involve LLMs as any important, original, or non-standard components.
        \item Please refer to our LLM policy (\url{https://neurips.cc/Conferences/2025/LLM}) for what should or should not be described.
    \end{itemize}

\end{enumerate}

\end{document}